\newtheorem{theorem}{Theorem}
\newtheorem{lemma}[theorem]{Lemma}
\newtheorem{proposition}[theorem]{Proposition}
\newtheorem{conjecture}{Conjecture}
\theoremstyle{definition}
\newtheorem{definition}{Definition}
\newcommand{\NN}{\mathbb{N}}				
\renewcommand{\emptyset}{\varnothing}			
\newcommand{\dom}{\mathrm{dom}}			
\newcommand{\FF}{\mathcal{F}}				
\renewcommand{\A}{\mathbf{A}}				
\newcommand{\pc}{\text{p.c.}}					
\newcommand{\DTM}{\mathtt{DTM}}			
\newcommand{\IM}{\mathtt{IM}}				
\newcommand{\SEM}{\mathtt{SEM}}			
\newcommand{\SIM}{\mathtt{SIM}}				
\newcommand{\CM}{\mathtt{CM}}				
\newcommand{\MM}{\mathtt{MM}}				
\newcommand{\DIM}{\mathtt{DIM}}				
\renewcommand{\TIME}{\mathrm{TIME}}		
\renewcommand{\SPACE}{\mathrm{SPACE}}		
\newcommand{\T}{\mathtt{T}}					
\renewcommand{\P}{\mathcal{P}}				
\newcommand{\sem}{\mathrm{sem}}				
\newcommand{\syn}{\mathrm{syn}}				
\newcommand{\ind}{\mathrm{ind}}				
\newcommand{\halt}{\mathrm{halt}}				
\renewcommand{\class}{\mathrm{classified}}		
\newcommand{\unclass}{\mathrm{unclassified}}		
\newcommand{\defeq}{\coloneqq}				
\newcommand{\eref}[1]{Eq.~(\ref{#1})}			
\newcommand{\neref}[1]{(\ref{#1})}				
\newcommand{\sref}[1]{Sec.~\ref{#1}}			
\newcommand{\lemref}[1]{Lemma~\ref{#1}}		
\newcommand{\propref}[1]{Proposition~\ref{#1}}	
\newcommand{\thmref}[1]{Theorem~\ref{#1}}		
\newcommand{\conjref}[1]{Conjecture~\ref{#1}}	
\begin{document}

\title{On Formally Undecidable Traits of Intelligent Machines}

\author{\name Matthew Fox \email matthew.fox@colorado.edu \\
\addr Department of Physics, University of Colorado, Boulder, CO}

\maketitle

\begin{abstract}
Building on work by \citeA{Alfonseca21}, we study the conditions necessary for it to be logically possible to prove that an arbitrary artificially intelligent machine will exhibit certain behavior. To do this, we develop a formalism like---but mathematically distinct from---the theory of formal languages and their properties. Our formalism affords a precise means for not only talking about the traits we desire of machines (such as them being intelligent, contained, moral, and so forth), but also for detailing the conditions necessary for it to be logically possible to decide whether a given arbitrary machine possesses such a trait or not. Contrary to \citeS{Alfonseca21} results, we find that Rice's theorem from computability theory cannot in general be used to determine whether an arbitrary machine possesses a given trait or not. Therefore, it is not necessarily the case that deciding whether an arbitrary machine is intelligent, contained, moral, and so forth is logically impossible.
\end{abstract}

\section{Introduction}
\label{sec:Introduction}


In pursuit of what some consider the goal of artificial intelligence (AI) research---namely, ``to understand the principles underlying intelligent behavior and to build those principles into machines that can then exhibit such behavior'' \cite{Russell17,Russell20}---it is the staunch opinion of the author, as well as authors of markedly higher esteem \cite{Bostrom14,Hawking14}, that there is no matter more important than the possibility that we might one day succeed.

Indeed, it is often argued that ``success'' in this goal, while in no way certain, may entail an unimaginably positive future---a future saturated, for example, by incomprehensibly grand healthcare \cite{Davenport19,Sutskever23}, transportation \cite{Wilbur23}, and education \cite{Kamalov23} that is due to, perhaps, unspeakable breakthroughs in biology \cite{Jumper21}, mathematics \cite{Fawzi22,Strogatz22}, physics \cite{Degrave22}, natural language processing \cite{Achiam23}, and so much more \cite{Grace24}. Ultimately, these arguments make it plain that next-generation machine learning (ML) and other AI models possess the potential to be overwhelmingly beneficial for humanity. However, on the axis of good and evil, intelligence lies orthogonal \cite{Bostrom14}. Thus, one should not dismiss the possibility that these same next-generation models simultaneously possess the potential to be overwhelmingly devastating for humanity. 

Outside often harmless quirks such as hallucinations in large language models (LLMs) like GPT-4 \cite{Huang23}, current trends toward autonomy in evermore complex and high-stakes environments suggest that serious accidents and risks are possible \cite{Amodei16,Hendrycks22}. Indeed, several consequential accidents have already occurred, from the infamous fiasco at the Knight Capital Group in which a rogue trading algorithm lost the company \$460 million dollars in just forty-five minutes \cite{Tabbaa18}, to the more recent and deplorable incident at General Motor's Cruise, where an autonomous vehicle dragged a critically injured pedestrian six meters forward while performing a pull-over maneuver \cite{Marshall23}. Since na\"ive resolutions like ``adding qualifiers'' or ``unplugging the AI'' are neither scalable nor logically tenable for controlling advanced systems like next-generation trading algorithms and autonomous vehicles \cite{McCarthy80,Hadfield17}, it is imperative to understand the extent to which AI systems can be accident- and risk-free.

Of course, speculations about AI risks are not new, and the degree to which we humans should actually worry is unclear. Plainly put, it is vague how far contemporary research is from fabricating a generally intelligent machine, and this is despite decades-worth of rapid and groundbreaking advances in compute, data, and algorithms \cite{Sevilla22}. It is thus only rational that ones' credence may apportion against the idea that severe AI risks are possible, let alone imminent. Some even go so far to regard such talk as moonshine, tantamount to tales of Asimovian fiction.\footnote{See \citeS{Ambartsoumean23} article for a comprehensive survey on AI risk skepticism.}

However, unlike other oft-fictionalized events like exceeding the speed of light or breeding a talking animal, the possibility of an AI-induced catastrophe has rallied an unprecedented degree of concern among top researchers \cite{Grace24}. As a result, the relatively nascent field of AI safety is burgeoning, and safety-centered research is now commonplace at many R1 institutions. In addition, governments around the globe are awakening to the solemnity of AI's potential, and are, in consequence, deliberating substantively on regulatory policies, such as the European Union's Artificial Intelligence Act \cite{Yakimova23}.

Thus, questions and concerns surrounding the safety of AI are growing. In part, this is due to our natural desire to fabricate intelligent machines that are not just intelligent, but that are intelligent in such a way that they respect human values. In \citeS{Russell17} words:
\begin{quote}
Up to now, AI has focused on systems that are better at making decisions; but this is not the same as making better decisions. No matter how excellently an algorithm maximizes, and no matter how accurate its model of the world, a machine's decisions may be ineffably stupid, in the eyes of an ordinary human, if its utility function is not well aligned with human values.
\end{quote}
Intelligence, therefore, is not the only trait that we humans ought to desire of our mechanistic progeny. Instead, as Russell intimates, there are other traits we ought to desire of them as well, such as them being contained, moral, corrigible, or whatever it may be that makes the machines intelligent \emph{and} well aligned with human values. Expounding what we want from future intelligent machines, therefore, is paramount to fabricating intelligent systems that are not just better at making decisions, but that are better at making better decisions.

\subsection{Desiderata of Intelligent Machines}

There is thus a problem of desires in AI, and it partitions nicely into an ``easy'' and a ``hard'' problem (though the easy problem is hardly easy). The easy problem is: if we develop intelligent machines in the future (according to some definition of ``intelligent''), what is it that we humans want from them? Is it something grand and idealistic, like an interstellar, utopian, and hedonistic way of life full of riches, galactic travel, and no suffering? Or is it something more prosaic, like a souped-up smartphone running GPT-$42$, or a handy artificial assistant that does your taxes, makes your breakfast, and takes you to the airport?

Answering this \emph{easy problem of desires}---to shortly be contrasted with a much harder problem of desires---is evidently vital if intelligent systems are to be to humanity's benefit (or, in Russell's words, to be ``well aligned with human values''). Equally evident, however, is that the answer to the easy problem is not presently known, and it will plausibly remain unknown for quite some time. Nevertheless, we think it plain that we humans, or at least a subset of us, do desire \emph{something} of the intelligent systems that we develop, for why else would we develop them?\footnote{Even if the answer, however cynical it may seem, is no more substantive than the mere capital gain for those in control, then so be it. Albeit not an admirable one, that is a desire nonetheless.} It is conceivable, therefore, that there is some list of sentences that would constitute at least a partial solution to the easy problem, where by ``partial solution'' we mean those desires for which there is overwhelming human consensus, such as, say, for us humans to maintain some amount of control over the intelligent systems, or, more generically, for us humans to retain our dominance here on earth. Seeking a partial solution, therefore, seems logically, and perhaps even practically tenable, and it in this sense that the easy problem of desires is ``easy''. 


The question that remains, then, is given a (partial) solution to the easy problem, how can we humans ensure that intelligent systems understand and adopt those desires so to act in such a way that they fulfill them? We call this \emph{the hard problem of desires} because, as we shall detail in this paper, depending on the desires we have, it can be logically (and hence practically) impossible to answer, even if our (partial) list of desires is rather short. 

In the context of control, the hard problem of desires entails the famous control problem in AI safety \cite{Russell22}: how can we humans ensure control over intelligent systems? One answer is that we cannot, at least not in general, because if we could, then we could also solve the halting problem \cite{Alfonseca21}. In this way, the control problem is logically, and hence practically, impossible to solve. While we somewhat disagree with this conclusion (\sref{sec:othertraitsofIM}), we underscore that our results generalize many of the techniques used by \citeA{Alfonseca21}. This is made possible by the fact that the hard problem itself is a direct generalization of the control problem, as it includes any desire whatsoever that we humans might have of intelligent systems.

Obviously, insofar as any given intelligent system is capable of satisfying a particular human desire, obtaining certainty that that system actually satisfies that desire is a tremendously high bar, because it necessitates a formal mathematical proof based on a description of the system and the desire. While for certain systems based on particular computational models (such as inverse reinforcement learning) and for certain desires (such as an ability for us humans to turn a system off) this may well be possible \cite{Hadfield17}, it is natural to ask if certainty in general is possible, even in principle. It is the purpose of this paper to explore this question, and to understand the extent to which a solution to the hard problem of desires is logically tenable.

In addition to knowing the (partial) solution to the easy problem of desires, however, at least two other fundamental issues obstruct us from directly addressing the hard problem. The first is that we require the (partial) solution to the easy problem to be said precisely enough so to be faithfully interpreted by machines. This in itself is a sort of ``intermediate problem of desires'', which the philosopher Bertrand Russell \citeyear{Russell72} would likely argue is insuperable:
\begin{quote}
Everything is vague to a degree you do not realize till you have tried to make it precise, and everything precise is so remote from everything that we normally think, that you cannot for a moment suppose that is what we really mean when we say what we think.
\end{quote}

Here, Russell's view is undoubtedly informed by \citeS{Quine51} famous essay \emph{Two Dogmas of Empiricism}, which suggests that not only is the intermediate problem hard, but so too is the easy problem, since linguistical issues surrounding the meaning of words abound, particularly in the context of synonymy and the distinction between analytic and synthetic truths. Nevertheless, Stuart Russell's \citeyear{Russell19} proposal of having machines learn our desires may be a way forward here, as it seemingly circumvents our having to explicitly articulate and encode our values into computers.

The second issue is that when it comes to the actions of machines inherently more intelligent than ourselves, it can be hard to say anything at all with confidence. After all, who are we to say what a system far more intelligent than the collective human species will and will not be able to do? Historically, indeed even dating back to the time of Turing, there has been an abundance of pronouncements by prominent computer scientists to the effect that ``no machine will ever do X'', where X is, for example, ``invent logical proofs'' \cite{McCorduck79} or ``beat a 9-dan professional at Go'' \cite{Johnson97}.\footnote{See \citeA{Bostrom14} for many more.} But of course, these and many other assertions were all eventually confuted. Thus, if history is any teacher, perhaps the lesson is that for any such X, the pronouncement that ``no machine will ever do X'' is na\"ive at worst and unimaginative at best. However, we believe with high credence that there are two choices of X---two ``disabilities'' as \citeA{Turing50} called them---which are necessarily out of reach of every machine, no matter how intelligent. They are:
\begin{enumerate}[(I)]
\item transcend the laws of physics;
\item compute a function not computable by any Turing machine.
\end{enumerate}

Depending on one's priors, one may find (I) or (II) more obvious than the other. A physicist will most likely find (I) the most evident, basically because intelligent machines, whatever they are, are at the very least physical devices and are ipso facto constrained by physical laws. Otherwise, if one suspects that intelligent machines may in some way transcend the laws of physics, then one is necessarily subscribing to a sort a dualism in which there exist both ``physical'' and ``non-physical'' events. However, akin to standard dualist philosophy, such a view invariably raises an analogue of Princess Elisabeth's interaction problem: how can the non-physical events that intelligent machine witness---events which uphold no physical properties such as mass, charge, location, or velocity---make a physical difference in the world? Douglas Hofstadter and Daniel Dennett emphasize the absurdity of this view in their anthology \emph{The Mind's I} \citeyear{Hofstadter01}, in which they say:
\begin{quote}
For a non-physical event to make a difference, it must make some physical event happen that wouldn't have happened if the non-physical event hadn't happened. But if we found a sort of event whose occurrence had this sort of effect, why wouldn't we decide \emph{for that very reason} that we had discovered a new sort of \emph{physical} event?
\end{quote}
Indeed, the history of physics suggests that any sort of ``unphysical'' thing that comprises intelligent systems, or that stems from them, will almost surely be regarded as part of a new paradigm of physics, tantamount to how the anomalous precession of the perihelion of Mercury was not seen as ``unphysical'', but rather as evidence that the Newtonian paradigm ultimately required revision \cite{Baum97}. All this suggests that (I) is almost surely the case.

What about (II)? Could it ever happen that an intelligent machine computes a function not computable by any Turing machine? To the computer scientist, this may be the obvious ``no'', perhaps in part due to daily exposure to algorithms, which are always simulable by some Turing machine. However, in search of a \emph{fundamental} reason why the answer to (II) is also almost surely no, we now discuss why it is essentially a staple of contemporary theoretical physics that (I) and (II) are logically equivalent.

\subsection{Computability in a Physical World}
\label{sec:computabilityinworld}

Let $\Sigma$ be an alphabet and let $\FF$ be the set of all partial functions from the set $\Sigma^*$ of all finite $\Sigma$-strings to $\Sigma^*$. A basic diagonalization argument proves that $\FF$ is not countable. However, if our means of reasoning about $\FF$ and the functions therein is limited to the finite manipulation of only a finite number of symbols (as is the case with pen and paper calculation and also all computer programming), then we humans---supplemented, perhaps, by unboundedly advanced machines, unlimited time, and all the physical resources in causal connection to us---can at most compute a countable number of the functions in $\FF$. 

Let us call this distinguished subset of functions $\Delta_1$, in accordance with the notation used to label the first level of the arithmetic hierarchy, which is a closely related object \cite{Kozen06}. Thus, $\Delta_1$ contains only those partial functions $\varphi : \Sigma^* \rightarrow \Sigma^*$ for which there is an effective, pen-and-paper procedure whereby a rote worker can deduce $\varphi(\sigma)$ for any $\sigma \in \Sigma^*$ in the domain of $\varphi$. Because of this, we say $\varphi$ is \emph{effectively calculable} if and only if (iff) $\varphi \in \Delta_1$.

Of course, ``effective calculability'' and the associated set $\Delta_1$ are awkwardly informal. It is the purpose of the Church-Turing thesis (CTT), however, to postulate precisely what these notions actually mean:
\begin{quote}
\textbf{Church-Turing Thesis:} A function $\varphi : \Sigma^* \rightarrow \Sigma^*$ is effectively calculable (and hence in $\Delta_1$) iff it is computable on a deterministic Turing machine.
\end{quote}

Despite the ostensible rigor of this statement, it is paramount that the CTT not be thought as something that one can rigorously prove, but rather as something, in \citeS{Turing54} own words, ``between a theorem and a definition''. The CTT, therefore, constitutes a sort of belief, a sort of intuition about what we humans think ``effective calculability'' rigorously means. In this way, the thesis is subject to the more Bayesian methods of science in which abductive, and not strictly deductive means are necessary to substantiate its claim. In the words of Stephen Kleene \citeyear{Kleene52}:
\begin{quote}
While we cannot prove [the CTT], since its role is to delimit precisely an hitherto vaguely conceived totality, we require evidence that it cannot conflict with the intuitive notion which it is supposed to complete; i.e., we require evidence that every particular function which our intuitive notion would authenticate as effectively calculable is [computable].
\end{quote}

It is along these lines that correlations between the physical world and effective calculability invariably creep in. In particular, since as humans our intuitions largely derive from our experiences of the physical world, what is regarded as effectively calculable may in some way correlate with the surrounding physical world. After all, as we detail in \sref{sec:DTMs}, the intuitive picture on which the Turing machine is based is manifestly physical, and all computers that exist today, and that will exist in the future, are necessarily physical devices. Thus, it is reasonable to think that computation is inextricably tied to physical law.

To see that this is plausibly the case, consider an ostensibly different subset of $\FF$ than $\Delta_1$, namely that set of partial functions $\varphi : \Sigma^* \rightarrow \Sigma^*$ for which (intuitively speaking) there exists a physical system---like a box of helium gas, a large register of electrons subject to a particular unitary dynamics, or a series of entangled black holes evolving according to an as-yet-undiscovered theory of quantum gravity---such that for all $\Sigma$-strings $\sigma$ in the domain of $\varphi$, $\varphi(\sigma)$ is computable by the mere physical evolution of the system. We shall call such functions \emph{physically calculable}. As above, this talk is awkwardly imprecise. Thus, like talk of effective calculability, it will take a thesis to raise the level of rigor to something more amenable to mathematical analysis.

The question, then, is what is the right thesis? Intuitively, every effectively calculable function is physically calculable. This follows because Turing machine instructions are finite, and so they are simulable with pen, paper, and a rote worker (all of which are physical systems abiding by the laws of physics). Conversely, insofar as quantum mechanics underlies everything there is in the universe (and we have as yet no serious evidence to the contrary), every physically calculable function is computable by some quantum system. Therefore, every physically calculable function is computable on a quantum computer, because quantum computers can simulate any quantum system \cite{Nielsen11}. But, modulo exponential overheads in computational complexity, quantum computers are simulable by classical computers, and hence by Turing machines \cite{Arora09}. 

Consequently, we are lead to the conclusion that a function is effectively calculable iff it is physically calculable, which is to say that the set $\Delta_1$ of effectively calculable functions \emph{equals} the set of physically calculable functions. The CTT, then, seems to be a postulate not just of computability theory, but of the physical world \cite{Deutsch85}:

\begin{quote}
\textbf{Physical Church-Turing Thesis:} A function $\varphi : \Sigma^* \rightarrow \Sigma^*$ is physically calculable iff it is computable on a deterministic Turing machine.
\end{quote}

In the author's opinion, this elevates $\Delta_1$ to the level of a fundamental constant of nature, on a par with the speed of light $c$, Planck's constant $\hbar$, and Newton's gravitational constant $G$. Notably, the PCTT entails that even those functions computed by the most exotic and esoteric processes in the universe are in $\Delta_1$. For example, consider the ridiculous process of encoding a $\Sigma$-string $\sigma$ into a series of entangled quantum particles, throwing them into a black hole, and then decoding the Hawking radiation at a later time into a string $\gamma$. Altogether, this constitutes a function $\sigma \mapsto \gamma$, and it is a genuine surprise of physics that this function is in $\Delta_1$---a function so imbedded in the fundamental laws of physics.

In summary, then, what undergirds proposition (II) in the preceding subsection is that computation itself is fundamentally physical. Therefore, if in the future an intelligent machine were to bring to bear all the resources of the physical world (encapsulated, perhaps, in a correct---or at least more correct---theory of quantum gravity), then contemporary physics suggests that there is no capacity for it to transcend the computational abilities of Turing machines. Rather, all that it might achieve is to reduce the computational complexity of the problems with which it is tasked, like how quantum computers afford an efficient means for factoring large integers.\footnote{Of course, reducing the computational complexity of a problem is no small feat, for the difference between an $O(2^n)$ algorithm versus an $O(n^2)$ algorithm can literally be a lifetime. Our point here, however, has to do with \emph{effective} (as opposed to \emph{efficient}) computation, and the laws of physics as we know them do not change what is and what is not effectively computable.} Therefore, if intelligent machines are constrained by the laws of physics (and again we see no way for them not to be), then they must also be constrained by the theory of computability.

We note that insofar as the PCTT remains true, this conclusion also goes the other way. That is, if intelligent machines are constrained by the theory of computability, then, by the PCTT, intelligent machines cannot exploit the laws of physics to compute anything not computable by a Turing machine.

In the remainder of this paper we shall not dwell too much more on the constraints imposed on intelligent systems by physical law (in part because the laws of physics, whatever they are, are not yet known, and may in fact never be fully known). Instead, our task here is to assume that intelligent systems are physical devices and then to take special care of the constraints thus imposed by computability theory.

\subsection{Intelligent Machines and Computability Theory}
\label{sec:MIH}

Mirroring \citeA{Good66}, let an \emph{intelligent machine} $\A$ be defined as any physical system (e.g. a machine) that is on a par with many, if not all, of the intellectual activities of any human. Like \citeA{Alfonseca21}, we also assume $\A$ is programmable, so that when it receives an input from the external world (such as the state of the world), it acts on the external world exclusively on the basis of the output of its program. By this definition, it is plain that the ``intelligence'' of $\A$ is inherent to its program, for if $\A$ ran no program, then its ``plan of action'', so to speak, would necessarily be void, in which case $\A$ would not act at all, let alone intelligently.

Let $M_\A$ be the program of $\A$, which on input $\sigma \in \Sigma^*$ computes a function $\varphi_{M_{\A}} : \Sigma^* \rightarrow \Sigma^*$ that details how $\A$ should act on the world. Since $\A$ is by definition a physical system, it holds that $\varphi_{M_\A}$ is physically calculable. Therefore, by the PCTT, $\varphi_{M_\A} \in \Delta_1$, i.e., $\varphi_{M_\A}$ is computable on a deterministic Turing machine.

Therefore, without any loss of generality, we may assume that the program $M_\A$ that $\A$ runs is a deterministic Turing machine. In consequence, for any definition of ``intelligent'', the set $\IM$ of all programs of intelligent machines is but a subset of the set $\DTM$ of all deterministic Turing machines:
\begin{equation}
\IM \subseteq \DTM.
\label{eq:SIMsubDTM}
\end{equation}
We hereafter refer to this inclusion, as well as the necessary assumptions leading up to it (such as the PCTT), as the \emph{Machine Intelligence Hypothesis} (or \emph{MIH}).

It is the purpose of this paper to explore the formal consequences of the MIH. We are particularly interested in the question: what fundamental limitations does \eref{eq:SIMsubDTM} impose on what we can and cannot say about intelligent machines, particularly in the context of the hard problem of desires? Our main results formalize an hitherto intuitive notion---namely, that in designing advanced intelligent systems, worrying about the resources necessary for an algorithm to properly run is paramount, because this affects our ability to prove if a given algorithm will behave in a particular manner (such as it being contained, moral, corrigible, and so forth.).

An approximate outline of this article is as follows. In \sref{sec:Preliminaries}, we introduce the necessary results in computability theory, such as the padding lemma and Rice's theorem, in order to say what the MIH decisively entails. Our main technical results are then developed in \sref{sec:traitsofDTMs}, which are later applied to intelligent machines (for any definition of ``intelligent'') in \sref{sec:applicationtoIMs}. Finally, in \sref{sec:discussion}, we discuss the greater context of our results, such as their relevance in an ever-hastening world of AI research and development.

\section{Preliminaries}
\label{sec:Preliminaries}

In this section, we introduce our notation and review several foundational notions from computability theory, such as Turing machines, partial computable functions, index sets, and Rice's theorem. Any reader familiar with these topics may skip to \sref{sec:traitsofDTMs}, where we develop our central definitions and theorems that are used later in this paper.

\subsection{Deterministic Turing Machines and Partial Computable Functions}
\label{sec:DTMs}

Let $\Sigma$ be an alphabet and let $\Sigma^*$ equal the set of all $\Sigma$-strings. Per convention, given two (unary\footnote{In fact, all partial functions are unary in the sense that for any finite $k \geq 1$, the $k$-fold Cartesian product $(\Sigma \times \Sigma \times \cdots \times \Sigma)^*$ is bijective to $\Sigma^*$ in an effectively calculable way \cite{Kozen06}.}) partial functions $\varphi, \psi: \Sigma^* \rightarrow \Sigma^*$ with domains $\dom(\varphi)$ and $\dom(\psi)$, respectively, we write $\varphi \simeq \psi$ iff $\varphi$ and $\psi$ are equal in the sense of partial functions, i.e., $\dom(\varphi) = \dom(\psi)$ and $\varphi(\sigma) = \psi(\sigma)$ for all $\sigma \in \dom(\varphi)$. Moreover, we write $\varphi(\sigma)\downarrow$ or $\varphi(\sigma) = \downarrow$ iff $\sigma \in \dom(\varphi)$ (in which case $\varphi(\sigma)$ is \emph{defined}) and $\varphi(\sigma)\uparrow$ or $\varphi(\sigma) = \uparrow$ iff $\sigma \not\in \dom(\varphi)$ (in which case $\varphi(\sigma)$ is \emph{undefined}). 

We remark that more general partial functions that map strings over an alphabet $\Gamma$ to strings over a different alphabet $\Omega$ are subsumed by this formalism by simply taking $\Sigma = \Gamma \cup \Omega$. Note, also, that for any alphabet of symbols $\Sigma$ like $\{0,1\}$ or $\{a,b, \dots, z\}$, the set $\Sigma^*$ of all $\Sigma$-strings is bijective to $\NN$. Therefore, one can always talk about arithmetic functions in order to reason about functions over a different alphabet. However, in order to leave the exact alphabet we use unfixed, we hereafter just employ a general alphabet $\Sigma$.

We now summarize the modern definition of what it means to ``compute'' a partial function, which was originally put forth by \citeA{Turing36}. Turing's idea rests on the notion of a \emph{Turing machine}, which he envisaged as follows \cite{Turing48}:

\begin{quote}
[A Turing machine has] an unlimited memory capacity obtained in the form of an infinite tape marked out into squares, on each of which a symbol could be printed. At any moment there is one symbol in the machine; it is called the scanned symbol. The machine can alter the scanned symbol, and its behavior is in part determined by that symbol, but the symbols on the tape elsewhere do not affect the behavior of the machine. However, the tape can be moved back and forth through the machine, this being one of the elementary operations of the machine. Any symbol on the tape may therefore eventually have an innings.
\end{quote}

Today, the Turing machine is generally accepted as the simplest model of computation. Of course, there are a myriad of other computational models, such as G\"odel's $\mu$-recursive functions, Church's $\lambda$-calculus, C programs, and so forth. But as these are all formally equivalent to Turing's simple machine, it is mathematically preferred to just deal with Turing machines---the simplest of the bunch. Thus, while what we derive in this paper is explicitly with respect to Turing machines, implicitly our conclusions apply much more broadly to the intersection of AI and the theory of computation itself.

While abstract, Turing machines afford a mathematical setting in which previously intuitive notions like ``algorithms'' and ``computation'' are made precise. In consequence, we are able to prove formal statements about algorithms. Of course, there are many different types of Turing machines, from two-way, single-tape, deterministic Turing machines that read and write over the binary alphabet, to two-way, multi-tape, probabilistic Turing machines that read in ternary and write in hexadecimal and that always halt in exponential space. In this paper, however, we do not concern ourselves with resource-bounded computations, which is the realm of computational complexity theory. Instead, we only care about the theory of computability, which is generally concerned with those functions that can be computed, no matter how much time or space it takes (so long, of course, that it is finite). For this reason, and the fact that there exists a universal Turing machine in the single-tape, deterministic model, we assume hereafter that all Turing machines are of this type. In terms of computability, the more exotic sounding types of machines like the probabilistic machine above are subsumed by the deterministic, single-tape model, albeit at the expense of a longer runtime. Of course, this must be the case, for otherwise the Church-Turing thesis would fail.

The formal definition of a deterministic Turing machine is as follows.

\begin{definition}
A \emph{two-way, single-tape, deterministic Turing machine} $M$ (hereafter called a \emph{deterministic Turing machine} or \emph{DTM} for short) is a tuple $(Q, \Sigma, \Gamma, \delta)$, where $Q$ is a finite and nonempty set of \emph{states}, $\Sigma$ is the \emph{input alphabet}, $\Gamma \supseteq \Sigma \cup \{\diamond\}$ is the \emph{tape alphabet} (with \emph{blank symbol} $\diamond$), and $\delta$ is the deterministic \emph{transition function}, which bears the form
\begin{equation}
\delta : Q \backslash \{q_A, q_R\} \times \Gamma \rightarrow Q \times \Gamma \times \{\lhd, \rhd\}.
\label{eq:TuringMachine}
\end{equation}
The state set $Q$ necessarily contains three distinguished states: the \emph{initial state} $q_0$, the \emph{accept state} $q_A$, and the \emph{reject state} $q_R$. Together, $q_A$ and $q_R$ constitute the \emph{halting states} and satisfy $q_A \neq q_R$. Importantly, since the domain of $\delta$ excludes the halting states, $M$ \emph{halts} if and only if it transitions to a halting state. We denote the set of all DTMs by $\DTM$.
\end{definition}

Following Turing, one should picture an initialized DTM $M$ as follows:
\begin{equation*}
\begin{tikzpicture}[baseline=(current  bounding  box.center), every node/.style={block},
        block/.style={minimum height=1.5em,outer sep=0pt,draw,rectangle,node distance=0pt}]
   \node (A) {$\ldots$};
   \node (B) [left=of A] {$\sigma_2$};
   \node (C) [left=of B] {$\sigma_1$};
   \node (D) [left=of C] {$\diamond$};
   \node (E) [left=of D] {$\diamond$};
   \node (F) [left=of E] {$\diamond$};
   \node (G) [right=of A] {$\sigma_n$};
   \node (H) [right=of G] {$\diamond$};
   \node (I)  [right=of H] {$\diamond$};
   \node (J)  [right=of I] {$\diamond$};
   \node (K) [above = 0.75cm of D,draw=black,thick] {$q_0$};
   \draw[-stealth, thick] (K) -- (D);
   \draw (F.north west) -- ++(-1cm,0) (F.south west) -- ++ (-1cm,0) 
                 (J.north east) -- ++(1cm,0) (J.south east) -- ++ (1cm,0);
\end{tikzpicture}
\end{equation*}
Here, the \emph{finite state control} (the black box above the tape) is initialized in the state $q_0$, and the two-way infinite tape contains the string $\sigma = \sigma_1\sigma_2\dots\sigma_n \in \Sigma^*$, with one letter $\sigma_i \in \Sigma$ per cell, which encodes the computation to be done. Every other cell of the tape contains the blank symbol $\diamond$. The \emph{tape head} of $M$ (black arrow) is initialized so that it points to the blank cell directly to the left of the first symbol in $\sigma$, namely $\sigma_1$. After the initialization, $M$ starts taking \emph{steps}, which are calls to the transition function $\delta$. For example, the first step could be $\delta(q_0, \diamond) = (q_1, \gamma, \rhd)$, which, in the canonical interpretation, means: 
\begin{enumerate}[(i)]
\item the state of $M$ transitions from $q_0$ to some $q_1 \in Q$, 
\item the tape head overwrites $\diamond$ with a (not necessarily new) symbol $\gamma \in \Gamma$, 
\item the tape head moves one cell to the right (as defined by the ``move right'' symbol $\rhd$).
\end{enumerate}
Pictorially, this new configuration is:
\begin{equation*}
\begin{tikzpicture}[baseline=(current  bounding  box.center), every node/.style={block},
        block/.style={minimum height=1.5em,outer sep=0pt,draw,rectangle,node distance=0pt}]
   \node (A) {$\ldots$};
   \node (B) [left=of A] {$\sigma_2$};
   \node (C) [left=of B] {$\sigma_1$};
   \node (D) [left=of C] {$\gamma$};
   \node (E) [left=of D] {$\diamond$};
   \node (F) [left=of E] {$\diamond$};
   \node (G) [right=of A] {$\sigma_n$};
   \node (H) [right=of G] {$\diamond$};
   \node (I)  [right=of H] {$\diamond$};
   \node (J)  [right=of I] {$\diamond$};
   \node (K) [above = 0.75cm of C,draw=black,thick] {$q_1$};
   \draw[-stealth, thick] (K) -- (C);
   \draw (F.north west) -- ++(-1cm,0) (F.south west) -- ++ (-1cm,0) 
                 (J.north east) -- ++(1cm,0) (J.south east) -- ++ (1cm,0);
\end{tikzpicture}
\end{equation*}

This process continues. If ever $M$ halts, then its output is defined by whatever non-blank string $\gamma_1\gamma_2\dots\gamma_m \in \Sigma^*$ remains on its tape. The map $\sigma_1\sigma_2\dots\sigma_n \mapsto \gamma_1\gamma_2\dots\gamma_m$ is known as the \emph{proper function} of $M$.

\begin{definition}
The \emph{proper function} of a DTM $M$ is the function $\varphi_M$ that $M$ computes when some $\sigma = \sigma_1\sigma_2\dots\sigma_n \in \Sigma^*$ is initially written on its tape. More formally, it is the partial function $\varphi_M : \Sigma^* \rightarrow \Sigma^*$ such that for all $\sigma \in \Sigma^*$,
\begin{equation}
\varphi_M(\sigma) =
\begin{cases}
\gamma &\text{if $M$ halts and the non-blank string $\gamma \in \Sigma^*$ remains on its tape,}\\
\uparrow &\text{if $M$ does not halt or the tape is blank.}
\end{cases}
\end{equation}
Correspondingly, we say a (partial or total) function $\varphi: \Sigma^* \rightarrow \Sigma^*$ is \emph{partial computable} (\emph{\pc})\ iff there is $M \in \DTM$ such that $\varphi_M \simeq \varphi$. By the Church-Turing thesis, the set of all \pc\ functions $\varphi : \Sigma^* \rightarrow \Sigma^*$ is exactly the set of all effectively calculable functions $\Delta_1$, which was outlined in \sref{sec:computabilityinworld}.
\end{definition}

Among the many remarkable features of Turing machines is their universality. That is, there exist \emph{universal Turing machines} in $\DTM$ that, given a description of any other Turing machine $N$, can simulate $N$ on any input. Incidentally, in the context of AI, \citeA{Bostrom14} has argued that any ``superintelligent'' machine must be universal in this sense.

For more on Turing machines, their function, and their universality, we refer the interested reader to any introductory book on computability theory, e.g., \citeS{Sipser13} or \citeS{Robic20}.

\subsection{Indices of Deterministic Turing Machines and Index Sets}

Evidently, $\DTM$ is a denumerable set, and it is straightforward to show that there exists a p.c.\ bijection $\eta : \DTM \rightarrow \NN$ such as the famous G\"odel numbering \cite{Kozen06}. This constitutes an \emph{indexing} of DTMs, also known as an \emph{enumeration} of DTMs. An indexing $\eta$ thus associates to every $M \in \DTM$ a unique positive integer $\eta(M)$ called its \emph{index}. In turn, the index of a DTM $M$ associates a unique index to its proper function:
\begin{equation}
\eta(\varphi_M) \defeq \eta(M).
\end{equation}
It is tempting to think that this, in turn, thus associates an index to \emph{all} the \pc\ functions equal to $\varphi_M$. However, this is not right because while every natural number is the index of exactly one DTM (and hence one \pc\ function), not every \pc\ function is associated with exactly one index. This follows because different DTMs can compute the same \pc\ function. For example, there are many different algorithms for computing the Fibonacci numbers.
\begin{definition}
Let $\eta : \DTM \rightarrow \NN$ be an indexing of DTMs. The \emph{index set} $\ind_\eta(\varphi)$ of a \pc\ function $\varphi : \Sigma^* \rightarrow \Sigma^*$ is the set of indices of DTMs that compute it. That is, 
\begin{equation}
\ind_\eta(\varphi) \defeq \left\{\eta(M) \mid M \in \DTM \land \varphi_M \simeq \varphi\right\}.
\end{equation}
\end{definition}

How large is $\ind_\eta(\varphi)$? That is, how many algorithms are there that compute a given \pc\ function $\varphi$? Since any algorithm can be edited so to implement the original program, but before returning the answer, loop for an integral number of times, it is perhaps not surprising that $|\ind_\eta(\varphi)|$ is \emph{infinite}. In computability theory, this result is known as the \emph{padding lemma} \cite{Robic20}:
\begin{lemma}[Padding Lemma]
For every indexing $\eta$ of DTMs and every \pc\ function $\varphi$, $\ind_\eta(\varphi)$ is a denumerable set.
\label{lem:paddinglemma}
\end{lemma}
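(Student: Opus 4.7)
The plan is to exhibit an injection $k \mapsto \eta(M_k)$ from $\NN$ into $\ind_\eta(\varphi)$; combined with the trivial inclusion $\ind_\eta(\varphi) \subseteq \NN$, this forces $\ind_\eta(\varphi)$ to be denumerable. Because $\varphi$ is p.c., there is at least one DTM $M^\ast = (Q, \Sigma, \Gamma, \delta) \in \DTM$ with $\varphi_{M^\ast} \simeq \varphi$, which serves as the seed of the construction. The task then reduces to manufacturing infinitely many pairwise-distinct DTMs that all compute $\varphi$: since $\eta$ is a bijection and hence injective, distinct machines yield distinct indices, all of which lie in $\ind_\eta(\varphi)$.

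To produce these machines, for each $k \in \NN$ I would form a padded DTM $M_k$ by adjoining $k$ fresh ``junk'' states $p_1, \ldots, p_k$ to $Q$ and extending $\delta$ so that each new pair $(p_i, \gamma)$ maps to, say, $(q_R, \gamma, \rhd)$. The crucial feature is that no $p_i$ appears as the state component of any output of $\delta_k$: the original $\delta$ only outputs states in $Q$ by its signature, and the new transitions output $q_R \in Q$. Therefore every $p_i$ is unreachable from $q_0$ on every input, so the step-by-step computations of $M_k$ and $M^\ast$ coincide, yielding $\varphi_{M_k} \simeq \varphi_{M^\ast} \simeq \varphi$ and hence $\eta(M_k) \in \ind_\eta(\varphi)$.

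It remains to argue that the $M_k$ are pairwise distinct as tuples, but this is transparent because their state sets have cardinalities $|Q| + k$, which differ across $k$. Injectivity of $\eta$ then promotes this to pairwise distinctness of the indices $\eta(M_0), \eta(M_1), \eta(M_2), \ldots \in \ind_\eta(\varphi)$, giving the desired injection $\NN \hookrightarrow \ind_\eta(\varphi)$ and completing the argument.

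I do not anticipate a genuine obstacle; the one point that deserves care is verifying that unreachability really does preserve the proper function, i.e.\ that $\varphi_{M_k} \simeq \varphi_{M^\ast}$. This reduces to a short induction on the number of computation steps showing that $M_k$, starting from $q_0$ on any input $\sigma$, visits exactly the same sequence of configurations as $M^\ast$, and is routine once unreachability has been established. The essential observation underlying the whole argument is that ``being a DTM'' is a syntactic condition on tuples, so any single p.c.\ function necessarily admits arbitrarily many syntactically inequivalent implementations.
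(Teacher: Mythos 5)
Your proof is correct, but it takes a genuinely different route from the paper. The paper does not carry out the padding construction at all: it simply remarks that ``a slick proof of this fact follows from Rice's theorem,'' i.e., if $\ind_\eta(\varphi)$ were finite it would be decidable, so the nontrivial singleton property $\{\varphi\}$ would be decidable, contradicting Theorem~\ref{thm:RiceTheorem}; combined with $\ind_\eta(\varphi) \subseteq \NN$ this gives denumerability. Your argument is instead the classical direct one: adjoin $k$ unreachable junk states to a fixed machine $M^\ast$ computing $\varphi$, observe that the transition function never outputs a junk state so the configuration sequence from $q_0$ is unchanged, and use injectivity of the bijection $\eta$ to turn infinitely many pairwise-distinct tuples into infinitely many indices. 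Each step checks out against the paper's definitions (the new states are non-halting, so you correctly extend $\delta$ on them; the tuples differ because $|Q|+k$ differs). What your approach buys is independence from Rice's theorem---which matters here, since Rice's theorem is itself typically proved using machine-modification arguments of exactly this padding flavor, so the paper's ``slick'' derivation risks an appearance of circularity that your construction avoids---plus an explicit witness for the infinitude. What the paper's route buys is brevity, given that Rice's theorem is already on the table as Theorem~\ref{thm:RiceTheorem}.
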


A slick proof of this fact follows from Rice's theorem (\thmref{thm:RiceTheorem} below), which can be found in most books on computability theory. 

\subsection{Properties of Partial Computable Functions}

Often, it is natural to consider only those \pc\ functions that abide by some criterion or satisfy some property. For example, as computational complexity theorists do, one may wish to understand those \pc\ functions for which there is a resource-bounded DTM that computes it, such as a DTM that always halts in polynomial time. It is similarly the case in AI, where one may wish to understand those \pc\ functions that are computable by ``intelligent machines'' or ``intelligent moral machines'', which are topics that we explore in \sref{sec:applicationtoIMs}. For now, however, all we seek is a formal means for even talking about distinguishing \pc\ functions.

\begin{definition}
A \emph{\pc\ function property} (or \emph{property} for short) $\P$ is a collection of \pc\ functions in $\Delta_1$. Given an indexing $\eta : \DTM \rightarrow \NN$, the \emph{index set of $\P$} is the union of the index sets of the functions in $\P$, i.e.,
\begin{equation}
\ind_\eta(\P) \defeq \bigcup_{\varphi \in \P} \ind_\eta(\varphi).
\label{eq:indicesofproperties}
\end{equation}
We say a p.c. function $\varphi$ \emph{satisfies} $\P$ iff $\varphi \in \P$. Otherwise, $\varphi \not\in \P$ and $\varphi$ \emph{does not satisfy $\P$}. Finally, a property $\P$ is \emph{trivial} iff either every \pc\ function in $\Delta_1$ satisfies $\P$ (i.e. $\P = \Delta_1$) or no \pc\ function satisfies $\P$ (i.e. $\P = \emptyset$). Otherwise, $\P$ is \emph{nontrivial}.
\end{definition}

For example, let $\RE$ be the collection of all recursively enumerable languages over $\Sigma^*$. Then, for each $L \in \RE$, there is a DTM $M_L$ such that for all $\sigma \in \Sigma^*$,
\begin{equation}
\sigma \in L \iff \varphi_{M_L}(\sigma)\downarrow.
\label{eq:partialcharacteristicfunction}
\end{equation}
Consequently, $L = \dom(\varphi_{M_L})$, so the proper function $\varphi_{M_L} : \Sigma^* \rightarrow \Sigma^*$ constitutes a \emph{partial characteristic function} of $L$ because it computes (and only computes) the ``yes'' instances of the decision problem ``is $\sigma$ in $L$?''. In particular, ``yes'' is entailed by any output whatsoever. Altogether, the set of all \pc\ partial characteristic functions of the recursively enumerable languages defines the property
\begin{equation}
\P_\RE \defeq \bigcup_{L \in \RE} \left\{\varphi \mid \varphi \in \Delta_1 \land \dom(\varphi) = L\right\}.
\end{equation}
It is plain that every \pc\ function is in $\P_\RE$, so $\P_\RE$ is trivial.

As another example, consider any nonempty complexity class $\C$ of \emph{decidable} languages like $\mathsf{P}$, $\NP$, $\BQP$, or $\PSPACE$.
Then, by the definition of a decidable language, for each $L \in \C$, there is a DTM $M_L$ and a distinguished string $\tilde{\sigma} \in \Sigma^*$ such that for all $\sigma \in \Sigma^*$,
\begin{equation}
\varphi_{M_L}(\sigma)\downarrow \quad \land \quad \sigma \in L \iff \varphi_{M_L}(\sigma) = \tilde{\sigma}.
\label{eq:characteristicfunction}
\end{equation}
Thus, the proper function $\varphi_{M_L}$ is total and equals a characteristic function of $L$ because it computes both the ``yes'' and ``no'' instances of the decision problem ``is $\sigma$ in $L$?''. In particular, ``yes'' is entailed by the output $\tilde{\sigma}$ and ``no'' is entailed by any output other than $\tilde{\sigma}$. Altogether, the set of all \pc\ characteristic functions of the languages in $\C$ defines the property
\begin{equation}
\P_\C \defeq \bigcup_{L \in \C} \left\{\varphi \mid \varphi \in \Delta_1 \land \forall\sigma \in \Sigma^* : \varphi(\sigma)\downarrow \land \left(\sigma \in L \iff \varphi(\sigma) = \tilde{\sigma}\right)\right\}.
\end{equation}
Evidently, $\P_\C \neq \emptyset$ (because $\C \neq \emptyset$) and $\P_\C \neq \P_\RE$ (because $\C \neq \RE$ by the existence of undecidable languages such as the halting problem). Therefore, $\P_\C$ is nontrivial.

\subsection{Rice's Theorem}

A natural problem is to determine whether a \pc\ function $\varphi$ satisfies a given property $\P$ or not. To use an example from before, one may wish to understand those \pc\ functions that are computable by ``intelligent moral machines''. Of course, an understanding like this is achievable if given a \pc\ function $\varphi$ one could algorithmically decide if it is computable by an intelligent moral machine or not. Obviously, though, in order to speak formally about this, there needs to be a language that represents the decision problem ``does the \pc\ function $\varphi$ satisfy $\P$?''. Fortunately, as the next definition shows, the requisite pieces to do this are already in their place.

\begin{definition}
Together, a \pc\ function property $\P$ and an indexing $\eta : \DTM \rightarrow \NN$ generate a language over $\NN$:
\begin{equation}
L_\P \defeq \ind_\eta(\P).
\label{eq:propertylanguage}
\end{equation}
We say $\P$ is \emph{decidable} iff $L_{\P}$ is decidable.
\end{definition}

This is a standard definition \cite{Robic20}, and we note that it entails $\varphi \in \P$ iff $\ind_\eta(\varphi) \subseteq L_{\P}$. In consequence, $\varphi \in \P$ iff every \pc\ function equal to $\varphi$ is also in $\P$. In this sense, then, deciding a \pc\ function property reduces down not to interrogating the syntactic structure of a particular \pc\ function, but rather to an interrogation of its more global \emph{semantic} behavior. In other words, deciding a \pc\ function property is independent of understanding exactly \emph{how} a function satisfying that property is constructed. Instead, it merely reduces to understanding the input-output relationship of that function. This syntactic versus semantic distinction is important, and it is something that we will come back to in \sref{sec:traitsofDTMs}.

By the padding lemma, for every nontrivial property $\P$, $\ind_\eta(\P)$, and hence $L_{\P}$, is denumerable, and so $\P$ is not obviously decidable. The boundary that demarcates decidable from undecidable properties is the content of \emph{Rice's theorem} \citeyear{Rice53}.

\begin{theorem}[Rice's Theorem]
\label{thm:RiceTheorem}
A \pc\ function property $\P$ is decidable iff $\P$ is trivial.
\end{theorem}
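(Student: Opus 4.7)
My plan is to prove the two directions separately. The easy direction is that if $\P$ is trivial, then $\P = \Delta_1$ or $\P = \emptyset$; correspondingly $L_\P = \NN$ or $L_\P = \emptyset$, both of which are decided by a constant-output DTM. The entire weight of the theorem therefore lies with the converse: assuming $\P$ is nontrivial, I want to show that $L_\P$ is undecidable.

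My approach is to reduce the halting problem---given $M \in \DTM$ and $\sigma \in \Sigma^*$, decide whether $M$ halts on $\sigma$---to $L_\P$. Let $\varphi_\uparrow$ denote the everywhere-undefined \pc\ function. Without loss of generality, I assume $\varphi_\uparrow \notin \P$; otherwise, I replace $\P$ by $\Delta_1 \setminus \P$, whose index set is $\NN \setminus L_\P$, so the two properties are equi-decidable and both remain nontrivial. Since $\P$ is nontrivial, I fix a witness $\psi \in \P$ together with a DTM $N$ satisfying $\varphi_N \simeq \psi$.

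For the reduction, I construct for each pair $(M,\sigma)$ a DTM $M'_{M,\sigma}$ which, on an arbitrary input $\tau$, first simulates $M$ on $\sigma$ and then, conditional on that simulation halting, simulates $N$ on $\tau$ and returns $N$'s output. If $M$ diverges on $\sigma$, then $M'_{M,\sigma}$ diverges on every input $\tau$, so $\varphi_{M'_{M,\sigma}} \simeq \varphi_\uparrow \notin \P$. If $M$ halts on $\sigma$, then $\varphi_{M'_{M,\sigma}} \simeq \varphi_N \simeq \psi \in \P$. Consequently, $\eta(M'_{M,\sigma}) \in L_\P$ iff $M$ halts on $\sigma$, and the map $(M,\sigma) \mapsto \eta(M'_{M,\sigma})$ is effectively computable, since it amounts to pasting the finite descriptions of $M$, $\sigma$, and the (fixed) machine $N$ into a template that invokes a universal DTM. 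If $L_\P$ were decidable, this would yield a decision procedure for the halting problem, a contradiction.

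The principal obstacle is the formal verification that $(M,\sigma) \mapsto \eta(M'_{M,\sigma})$ is itself a \pc\ (indeed total computable) function; this rests on the existence of a universal DTM, referenced in the excerpt, together with a standard composition-of-descriptions argument. The only other subtlety is the \emph{WLOG} reduction to $\varphi_\uparrow \notin \P$, which is justified by the fact that decidability is preserved under taking complements. Once the template for $M'_{M,\sigma}$ is in hand, the contradiction with the undecidability of the halting problem is immediate.
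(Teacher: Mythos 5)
Your proof is correct: it is the standard reduction from the halting problem, using the WLOG normalization $\varphi_\uparrow \notin \P$ (justified since $\ind_\eta(\Delta_1 \setminus \P) = \NN \setminus L_\P$, so complementation preserves both nontriviality and decidability), a fixed witness $N$ for some $\psi \in \P$, and the s-m-n-style construction of $M'_{M,\sigma}$. Note that the paper itself states \thmref{thm:RiceTheorem} as a classical result with a citation to Rice (1953) and offers no proof, so there is nothing internal to compare against; your argument matches the textbook proof the paper implicitly relies on, and the two points you flag as requiring care (computability of $(M,\sigma) \mapsto \eta(M'_{M,\sigma})$ and the complementation step) are exactly the right ones and are handled correctly.
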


On first encounter Rice's theorem is surprising because it entails that decidable function properties are the exception rather than the rule. Take note, however, that Rice's theorem does not assert that no DTM can decide whether any \emph{particular} \pc\ function satisfies $\P$ or not, but rather that no DTM can decide whether \emph{every} \pc\ function satisfies $\P$ or not. This difference is important. It is the reason why, for example, we know many $\NP$-complete problems like $3$SAT or SUBSET-SUM despite the fact that $\P_{\NP}$ is nontrivial.

\section{Traits of Deterministic Turing Machines}
\label{sec:traitsofDTMs}

Not every DTM is the same. Whereas some DTMs compute different functions, some DTMs compute the same function but nevertheless compute differently. Again, there are a myriad of ways (infinitely many, in fact, by the padding lemma) to compute the Fibonacci numbers.

Whatever the distinction, it is often natural to prefer one set of DTMs over another. If, for example, time is a critical resource (as it is in medicine, autonomous driving, and stock trading), then it is natural to prefer that machine which ``gets the job done fastest'' over some other machine that merely ``gets the job done''. For more advanced systems, one may prefer one machine over another because it is ``more corrigible'' (see \sref{sec:applicationtoIMs}). Ultimately, whatever the reason for distinction, we seek a formal means for even talking about distinguishing DTMs. That is the purpose of this section, which is to be contrasted with the purpose of the last section, which sought a formal means for talking about distinguishing $\pc$ functions.

\subsection{Rice's Theorem for Traits}

Similar to how we distinguish \pc\ functions by the properties they satisfy, we shall distinguish DTMs by the \emph{traits} they possess.

\begin{definition}
A \emph{trait} $\T$ is a collection of DTMs. We say a DTM $M$ \emph{possesses} $\T$ iff $M \in \T$. Otherwise, $M \not\in \T$ and $M$ \emph{does not possess} $\T$.
\end{definition}

For example, the set comprised of all the universal DTMs is a trait. Moreover, it follows from \eref{eq:SIMsubDTM} that $\IM$ is a trait---it is by definition the collection of DTMs that ``possess intelligence''.

Analogous to properties of \pc\ functions, an indexing $\eta : \DTM \rightarrow \NN$ and a trait $\T$ generate a language over $\NN$:
\begin{equation}
L_\T \defeq \left\{\eta(M) \mid M \in \T\right\}.
\label{eq:traitlanguage}
\end{equation}
\begin{definition}
A trait $\T$ is \emph{decidable} iff $L_\T$ is decidable.
\end{definition}
This definition makes sense because $M \in \T$ iff $\eta(M) \in L_\T$.

Like properties of \pc\ functions, it is desirable to know what traits are decidable. The exact criterion requires Definitions~\ref{def:nontrivial} and \ref{def:semantic} below.
\begin{definition}
\label{def:nontrivial}
A trait $\T$ is \emph{trivial} iff $\T$ is $\emptyset$ or $\DTM$. Otherwise, $\T$ is \emph{nontrivial}.
\end{definition}

Due to similarity of terms, one might suspect that Rice's theorem applies in spades to traits. This thinking, however, is misguided: fundamentally, traits and properties of \pc\ functions are different mathematical objects, so the theorems concerning one do not immediately concern the other. In fact, it is false that every nontrivial trait is undecidable because the trait $\{M \mid \text{$M$ has $n$ states}\}$ is both nontrivial and decidable for every positive integer $n$. An analogue of Rice's theorem for traits, therefore, requires a finer characterization.

\begin{definition}
\label{def:semantic}
The \emph{machine class} $[M]$ of a DTM $M$ is the collection of DTMs that compute the same proper function as $M$. That is, $[M] \defeq \{N \in \DTM \mid \varphi_N \simeq \varphi_M\}.$ We say a trait $\T$ is \emph{semantic} iff $[M] \subseteq \T$ for every $M \in \T$. Otherwise, $\T$ is \emph{syntactic}.
\end{definition}

Intuitively, semantic traits distinguish DTMs based on what they do, not how they do it. Things like runtime, data, memory management, and so forth are all semantically incidental (and thus syntactic) because they do not change the input-output relationship of a DTM.

Evidently, every trivial trait is semantic but not every semantic trait is trivial. Moreover, it follows from the padding lemma that every nontrivial semantic trait is denumerable. It is therefore plausible that deciding nontrivial semantic traits is undecidable. Indeed, this is the case.
\begin{theorem}[Rice's Theorem for Traits]
\label{thm:ricetraits}
If $\T$ is a nontrivial and semantic trait, then $\T$ is undecidable.
\end{theorem}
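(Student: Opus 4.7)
The plan is to reduce \thmref{thm:ricetraits} to \thmref{thm:RiceTheorem} by exploiting the bijection that a semantic trait induces between itself and a property of p.c.\ functions. Specifically, given a nontrivial, semantic trait $\T$, I would associate to it the property
\[
\P_\T \defeq \left\{\varphi \in \Delta_1 \mid \exists M \in \T : \varphi_M \simeq \varphi\right\}.
\]
The goal will be to show that (i) $\P_\T$ is a nontrivial p.c.\ function property, and (ii) $L_\T = L_{\P_\T}$. Once these are in hand, Rice's theorem applied to $\P_\T$ yields the undecidability of $L_{\P_\T}$, and hence of $L_\T$.

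First I would establish (i). Nontriviality of $\T$ means there exist $M \in \T$ and $N \notin \T$. The forward witness is immediate: $\varphi_M \in \P_\T$, so $\P_\T \neq \emptyset$. For the other direction I would argue by contrapositive using the semantic hypothesis: if $\varphi_N \in \P_\T$, then $\varphi_N \simeq \varphi_{M'}$ for some $M' \in \T$, so $N \in [M']$, and semanticity of $\T$ forces $N \in \T$, a contradiction. Hence $\varphi_N \notin \P_\T$, so $\P_\T \neq \Delta_1$.

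Next I would verify (ii) by unpacking the definitions. On one hand,
\[
L_{\P_\T} = \ind_\eta(\P_\T) = \bigcup_{\varphi \in \P_\T} \ind_\eta(\varphi) = \left\{\eta(M') \mid M' \in \DTM \land \varphi_{M'} \in \P_\T\right\}.
\]
By the definition of $\P_\T$, $\varphi_{M'} \in \P_\T$ iff $\varphi_{M'} \simeq \varphi_M$ for some $M \in \T$, and by the semantic hypothesis this is equivalent to $M' \in \T$. Thus $L_{\P_\T} = \{\eta(M') \mid M' \in \T\} = L_\T$. Combining (i) and (ii), \thmref{thm:RiceTheorem} gives that $L_{\P_\T}$, and hence $L_\T$, is undecidable, so $\T$ is undecidable.

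The main obstacle, if any, is the careful handling of the semantic condition in the ``$\varphi_N \notin \P_\T$'' direction of (i); one must resist the temptation to conflate ``same proper function'' with ``same DTM'' and instead exploit closure of $\T$ under $[\,\cdot\,]$ precisely at that step. Everything else is essentially a bookkeeping exercise in translating between traits and properties via the indexing $\eta$.
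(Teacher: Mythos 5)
Your proposal is correct and follows essentially the same route as the paper: both reduce to Rice's theorem via the associated property $\P_\T = \{\varphi_M \mid M \in \T\}$ and the identity $L_{\P_\T} = \ind_\eta(\P_\T) = L_\T$, which holds precisely because $\T$ is semantic. If anything, your argument is slightly more careful than the paper's, which dismisses the nontriviality of $\P_\T$ with a ``clearly''; your contrapositive step showing $\varphi_N \notin \P_\T$ for $N \notin \T$ makes explicit that semanticity is needed there too.
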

\begin{proof}
Since $\T$ is semantic, $\eta(M) \in L_\T$ iff $\ind(\varphi_M) \subseteq L_\T$. Thus, $\bigcup_{M \in \T} \ind(\varphi_M) = L_\T.$ Now consider the \pc\ function property $\P_\T \defeq \{\varphi_M \mid M \in \T\}$. By \eref{eq:indicesofproperties},
\begin{equation}
\ind_\eta(\P_\T) = \bigcup_{\varphi \in \P_\T} \ind_\eta(\varphi) = \bigcup_{M \in \T} \ind_\eta(\varphi_M) = L_\T.
\end{equation}
Deciding $L_\T$, therefore, is equivalent to deciding the \pc\ function property $\P_\T$. By Rice's theorem, $\P_\T$ is decidable iff $\P_\T$ is trivial. But clearly $\P_\T$ is not empty and it does not contain all \pc\ functions. Therefore, $\P_\T$ is nontrivial and so $L_\T$ is undecidable.
\end{proof}

\thmref{thm:ricetraits} is evidently the natural analogue of Rice's theorem for traits. For this reason, we call it as such, and often refer to it by the abbreviation ``RTT''. Importantly, however, unlike Rice's theorem, the converse of RTT is false.
\begin{proposition}
\label{prop:converseisfalse}
The converse of \thmref{thm:ricetraits} is false.
\end{proposition}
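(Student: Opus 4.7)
The plan is to show the converse fails by exhibiting a trait that is undecidable but not semantic. First, observe that triviality is a red herring: both trivial traits ($\emptyset$ and $\DTM$) have decidable index languages, so any undecidable trait is automatically nontrivial. Thus what the converse really asserts is that every undecidable trait is semantic, and to refute it I only need to construct one undecidable \emph{syntactic} trait.

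My strategy is to take a known undecidable semantic trait and surgically delete a single DTM from it, breaking semanticity without destroying undecidability. Let $H \defeq \{M \in \DTM \mid \varphi_M(\epsilon)\downarrow\}$ be the trait of machines that halt on the empty input. The trait $H$ is clearly nontrivial and semantic (if $\varphi_N \simeq \varphi_M$, then $\varphi_N(\epsilon)\downarrow$ iff $\varphi_M(\epsilon)\downarrow$), so by \thmref{thm:ricetraits} it is undecidable. Fix any concrete $M^* \in H$ (e.g.\ a machine whose initial transition sends $q_0$ to $q_A$, so it halts immediately on every input). By the padding lemma (\lemref{lem:paddinglemma}), $\ind_\eta(\varphi_{M^*})$ is denumerable, so there exists some $M^{**} \in [M^*]$ with $M^{**} \neq M^*$. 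Now set
\begin{equation}
\T \defeq H \setminus \{M^*\}.
\end{equation}

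I would then verify three claims. (i) $\T$ is nontrivial: it contains $M^{**}$ and excludes any machine that diverges on $\epsilon$. (ii) $\T$ is syntactic: $M^{**} \in \T$, yet $M^* \in [M^{**}] = [M^*]$ and $M^* \notin \T$, so $[M^{**}] \not\subseteq \T$, violating \defref{def:semantic}. (iii) $\T$ is undecidable, which I would show by a direct reduction: given an arbitrary $M \in \DTM$, equality $M = M^*$ is a decidable check on the finite tuples $(Q,\Sigma,\Gamma,\delta)$; if $M = M^*$ output ``yes'' (since $M^* \in H$), and otherwise output the verdict of a hypothetical decider for $\T$ on $M$. This procedure would decide $H$, contradicting the undecidability of $H$.

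There is no serious obstacle here; the only mildly delicate point is recognizing that the definition of a trait makes no computability demand on its defining condition, so one is free to carve out a single machine ``by name.'' The construction also illustrates why RTT cannot have a converse: deciding a semantic trait is a purely behavioral question governed by Rice's theorem, whereas a syntactic trait can smuggle in arbitrary finite combinatorial information about the tuple of $M$ on top of semantic information, and the semantic part alone can already be enough to render it undecidable.
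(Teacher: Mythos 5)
Your proof is correct and refutes the converse by a genuinely different construction than the paper's. The paper exhibits the trait $\T_{\halt}(n)$ of $n$-state machines that halt on every input: it is syntactic because the state count is not preserved across a machine class, and its undecidability is established by a separate reduction to the halting problem (involving a uniform decider over all $n$ and a state-counting machine). You instead perturb a trait that RTT has already shown undecidable---deleting the single machine $M^*$ from $H$---so that syntacticity is immediate and undecidability is inherited for free, since $H = \T \cup \{M^*\}$ and adjoining a finite set preserves decidability (in effect a special case of \lemref{lem:finitelemma}). Your route is shorter and more self-contained, and your opening observation that the converse reduces to ``every undecidable trait is semantic'' (because trivial traits are decidable) is a clean framing the paper leaves implicit; the paper's example buys a more \emph{natural} witness, one defined by a recognizable syntactic feature (state count) that foreshadows its later treatment of resource-bounded traits. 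Two small points: your parenthetical choice of $M^*$ as a machine that halts immediately does not actually lie in $H$ under the paper's convention, since $\varphi_M(\epsilon)$ is declared undefined when the tape is left blank---pick instead a machine that writes a non-blank symbol before halting, which changes nothing else. And your closing remark should be stated carefully: it is the undecidability of the \emph{semantic trait} $H$ containing $\T$ that drives the argument, not of $\sem(\T)$ itself (here $\sem(\T) = H \setminus [M^*]$, which happens also to be undecidable, but that is not what your reduction uses).
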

\begin{proof}
By way of contradiction, it suffices to exhibit a nontrivial and syntactic trait that is undecidable. To this end, consider the trait $\T_{\halt}(n)$, where for each $n \in \NN$,
\begin{equation}
\T_{\halt}(n) \defeq \{M \in \DTM \mid \text{$M$ has $n$ states and $\forall \sigma \in \Sigma^* : \varphi_M(\sigma)\downarrow$}\}.
\end{equation}
It is plain that for every $n \in \NN$, $\T_\halt(n)$ is nontrivial. Moreover, for any $M \in \T_\halt(n)$, not every DTM in $[M]$ has $n$ states. Therefore, for every $n \in \NN$, $\T_\halt(n)$ is syntactic. It remains to prove that $\T_\halt(n)$ is undecidable. For contradiction, suppose that for all $n \in \NN$, $\T_\halt(n)$ is decidable. Then, there is a DTM $H$ such that for every $\sigma \in \Sigma^*$ and every $M \in \DTM$,\footnote{Here, in order for $H$ to take decimal- and $\Sigma$-string-valued inputs, its input alphabet is taken to be $\Sigma \cup \{0,1,2,3,4,5,6,7,8,9\}$. The same is also true for the DTM $S$, which is introduced later in the proof.}
\begin{equation}
\varphi_H(\eta(M), \sigma, n) = 
\begin{cases}
1 & \text{if $M$ has $n$ states and $\varphi_M(\sigma)\downarrow$},\\
0 & \text{otherwise}.
\end{cases}
\end{equation}
As $\{M \mid M\text{ has $n$ states}\}$ is decidable for every $n$, there exists a DTM $S$ such that for every $M \in \DTM$, $\varphi_S(\eta(M))$ equals the number of states of $M$. There is therefore a DTM $N$ such that for all $\sigma \in \Sigma^*$ and all $M \in \DTM$,
\begin{equation}
\varphi_N : (\eta(M), \sigma) \mapsto \varphi_H\left(\eta(M), \sigma, \varphi_S(\eta(M))\right).
\end{equation}
By the definitions of $N$ and $H$, it holds that $\varphi_N(\eta(M), \sigma)$ equals $1$ if $\varphi_M(\sigma)\downarrow$ and $M$ has a number of states equal to the number of states of $M$. Otherwise, $\varphi_N(\eta(M), \sigma)$ equals $0$. But ``$M$ has a number of states equal to the number of states of $M$'' is a tautology, so $\varphi_N(\eta(M), \sigma)$ equals $1$ if $\varphi_M(\sigma)\downarrow$, and it equals $0$ otherwise. $N$, therefore, decides the halting problem, which is impossible.
\end{proof}

Thus, in consequence to RTT and \propref{prop:converseisfalse}, we learn that it is necessary, but in no way generally sufficient, that a nontrivial trait be syntactic in order for it to be decidable. It is therefore desirable to have a finer characterization of which syntactic traits are decidable and which are not. 

\subsection{An Undecidability Criterion for Syntactic Traits}

The following lemma is trivial but has many useful corollaries.
\begin{lemma}
\label{lem:semantictraits}
Let $\SEM \subseteq 2^\DTM$ be the collection of all semantic traits. Then $\SEM$ forms an algebra over $\DTM$. That is,
\begin{enumerate}[(i)]
\item $\emptyset \in \SEM$,
\item $\T \in \SEM \implies \DTM \backslash \T \in \SEM$,
\item $\T_1, \T_2 \in \SEM \implies \T_1 \cup \T_2 \in \SEM$.
\end{enumerate}
\end{lemma}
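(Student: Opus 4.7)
The plan is to observe that machine-class membership is an equivalence relation on $\DTM$ (namely $M \sim N$ iff $\varphi_M \simeq \varphi_N$), so $[M]$ is exactly the equivalence class of $M$. Under this lens, a trait $\T$ is semantic precisely when it is a union of $\sim$-equivalence classes, and the collection of such unions is automatically closed under complementation and finite unions. All three parts reduce to this one observation.

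For (i), I would note that the condition ``$[M] \subseteq \T$ for every $M \in \T$'' is vacuous when $\T = \emptyset$, so $\emptyset \in \SEM$ immediately. For (iii), given $\T_1, \T_2 \in \SEM$ and $M \in \T_1 \cup \T_2$, pick $i \in \{1,2\}$ with $M \in \T_i$; by hypothesis $[M] \subseteq \T_i \subseteq \T_1 \cup \T_2$, so $\T_1 \cup \T_2$ is semantic. These are both one-line arguments and should be stated without further comment.

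The part that requires a small contrapositive trick is (ii). Suppose $\T \in \SEM$ and let $M \in \DTM \setminus \T$; I need $[M] \subseteq \DTM \setminus \T$. Assume for contradiction that some $N \in [M]$ lies in $\T$. Because $\varphi_N \simeq \varphi_M$, the relation defining machine classes is symmetric, so $M \in [N]$; and because $\T$ is semantic, $[N] \subseteq \T$. Hence $M \in \T$, contradicting $M \in \DTM \setminus \T$. Thus $[M] \subseteq \DTM \setminus \T$, which gives $\DTM \setminus \T \in \SEM$.

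I do not anticipate a real obstacle here; the only subtlety worth flagging is that one is implicitly using symmetry (and transitivity) of $\simeq$ to identify $[M]$ and $[N]$ whenever $N \in [M]$, which is what makes the complementation argument go through. Once that is made explicit, (i)--(iii) are all immediate, and the lemma follows.
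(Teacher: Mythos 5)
Your proof is correct; the paper itself omits the argument entirely (declaring the lemma ``trivial''), and your verification---viewing semantic traits as unions of equivalence classes under $M \sim N \iff \varphi_M \simeq \varphi_N$, with the symmetry of $\simeq$ doing the work in the complementation step---is exactly the standard argument the paper leaves implicit. Nothing is missing.
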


If an arbitrary but finite logical combination $\overline{\T}$ of nontrivial semantic traits is itself nontrivial, then it follows straightforwardly from RTT and \lemref{lem:semantictraits} that $\overline{\T}$ is semantic and undecidable. More precisely:
\begin{proposition}
\label{cor:intersectiontraits}
Let $\T_1, \dots, \T_n$ be nontrivial semantic traits and let each $\bigcirc_i$ be either $\cup$ or $\cap$. If $\overline{\T} = \T_1 \bigcirc_1 \T_2 \bigcirc_2 \cdots \bigcirc_{n-1} \T_n$ is nontrivial, then $\overline{\T}$ is semantic and undecidable.
\end{proposition}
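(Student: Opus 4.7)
The plan is to separate the claim into its two parts: first show that $\overline{\T}$ is semantic, and then invoke \thmref{thm:ricetraits} to conclude undecidability. Semanticity is the substantive step, since undecidability then follows immediately from the hypothesis that $\overline{\T}$ is nontrivial.

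To show $\overline{\T}$ is semantic, I would lean on \lemref{lem:semantictraits}, which asserts that $\SEM$ is an algebra over $\DTM$ and is therefore closed under complement and finite unions. The lemma as stated does not list intersection among its clauses, but intersection closure is an immediate consequence of De Morgan's law: for any $\T_1, \T_2 \in \SEM$,
\begin{equation}
\T_1 \cap \T_2 = \DTM \setminus \bigl((\DTM \setminus \T_1) \cup (\DTM \setminus \T_2)\bigr),
\end{equation}
which belongs to $\SEM$ by clauses (ii) and (iii). So $\SEM$ is in fact closed under both $\cup$ and $\cap$.

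Given this, a straightforward induction on $n$ (the number of atomic traits appearing in $\overline{\T}$) establishes that $\overline{\T} \in \SEM$: the base case $n = 1$ is the hypothesis $\T_1 \in \SEM$, and the inductive step combines two semantic subtraits with $\cup$ or $\cap$, each of which preserves membership in $\SEM$. This argument does not depend on the order in which the binary operations $\bigcirc_i$ are parsed, since every such parse tree is built exclusively from operations that $\SEM$ is closed under.

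Finally, since $\overline{\T}$ is semantic by the preceding paragraph and nontrivial by hypothesis, \thmref{thm:ricetraits} (RTT) applies and yields that $\overline{\T}$ is undecidable, which completes the proof. I do not anticipate a main obstacle: the only mildly nonobvious point is deriving intersection closure from the explicit clauses of \lemref{lem:semantictraits}, and this is a one-line De Morgan argument.
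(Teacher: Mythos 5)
Your proposal is correct and follows essentially the same route as the paper, which states only that the proposition ``follows straightforwardly from RTT and \lemref{lem:semantictraits}''; you have simply filled in the details of that one-line argument. The De Morgan step to obtain closure of $\SEM$ under intersection and the induction over the parse tree are exactly the implicit content of the paper's appeal to the algebra structure, and the final application of \thmref{thm:ricetraits} to the nontrivial semantic trait $\overline{\T}$ is as intended.
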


Consequently, to formally prove that any given DTM possesses a given trait $\T$ or not, it is necessary that $\T$ be neither semantic nor a finite, logical combination of semantic traits. In other words, to formally prove whether any given DTM possesses a desirable trait or not requires characterizing the desirable trait not strictly by what the DTM does, but also by how it does it. This point is paramount, and it motivates a distinction between the \emph{semantic} and \emph{syntactic parts} of a given trait.
\begin{definition}
The \emph{semantic part} of a trait $\T$ is the set $\sem(\T) \defeq \{M \mid [M] \subseteq \T\}$, while the \emph{syntactic part} of $\T$ is the set $\syn(\T) \defeq \T \backslash \sem(\T)$.
\end{definition}

Evidently, $\T = \sem(\T)$ iff $\T$ is semantic. Therefore, $\T$ is syntactic iff $\syn(\T) \neq \emptyset$. In general, it holds that $\T = \sem(\T) \cup \syn(\T)$ for every trait $\T$, which, together with \eref{eq:traitlanguage}, implies
\begin{equation}
L_\T = L_{\sem(\T)\, \cup\, \syn(\T)} = L_{\sem(\T)} \cup L_{\syn(\T)}.
\label{eq:semsynseparation}
\end{equation}
This equation is key, because we can leverage it along with the following lemma to establish an undecidability criterion for \emph{syntactic} traits.
\begin{lemma}
\label{lem:finitelemma}
Let $L_1$ and $L_2$ be languages over $\Sigma^*$ such that $|L_2|$ is finite. If $L_1 \cup L_2$ is decidable, then $L_1$ is decidable.
\end{lemma}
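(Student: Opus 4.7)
The plan is to reduce deciding $L_1$ to deciding $L_1 \cup L_2$ by hard-coding the finitely many discrepancies between the two languages into a lookup table. The key observation is that since $L_2$ is finite, so is $L_1 \cap L_2$; and every finite subset of $\Sigma^*$ is decidable, because a DTM can simply check its input against a finite list of strings (this existence is non-constructive in that we may not know \emph{which} DTM is correct, but some such DTM provably exists).

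Concretely, I would proceed as follows. First, invoke the finiteness of $L_2$ to obtain a DTM $M_{L_2}$ that decides $L_2$, and the finiteness of $L_2 \cap L_1$ (as a subset of $L_2$) to obtain a DTM $M_{L_2 \cap L_1}$ that decides $L_2 \cap L_1$. Second, let $M_\cup$ be a DTM that decides $L_1 \cup L_2$, which exists by hypothesis. Third, construct a DTM $M_{L_1}$ that on input $\sigma \in \Sigma^*$:
\begin{enumerate}[(i)]
\item runs $M_{L_2}$ on $\sigma$;
\item if $\sigma \in L_2$, outputs whatever $M_{L_2 \cap L_1}$ outputs on $\sigma$;
\item if $\sigma \notin L_2$, outputs whatever $M_\cup$ outputs on $\sigma$.
\end{enumerate}

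Correctness follows by case analysis: if $\sigma \in L_2$, then membership in $L_1$ coincides with membership in $L_2 \cap L_1$, which $M_{L_2 \cap L_1}$ decides; if $\sigma \notin L_2$, then $\sigma \in L_1 \iff \sigma \in L_1 \cup L_2$, which $M_\cup$ decides. Totality of $M_{L_1}$ is immediate since each invoked decider halts on every input.

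Because the argument is short and the components are standard, I do not anticipate a genuine obstacle. The only conceptual wrinkle worth flagging is the non-constructive character of the proof: producing $M_{L_2 \cap L_1}$ explicitly would require knowing, for each of the finitely many $\sigma \in L_2$, whether $\sigma \in L_1$, which we cannot in general extract from the hypotheses. However, the statement of the lemma asks only for decidability (i.e., the \emph{existence} of a decider for $L_1$), and among the finitely many candidate DTMs corresponding to all possible truth assignments on $L_2$, at least one is correct — which is all we need.
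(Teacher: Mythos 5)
Your proof is correct and follows essentially the same route as the paper's: both rest on the decomposition $\sigma \in L_1 \iff (\sigma \in L_1 \cap L_2) \lor (\sigma \in L_1 \cup L_2 \land \sigma \notin L_2)$, together with the facts that $L_2$ and $L_1 \cap L_2$ are finite (hence decidable) and $L_1 \cup L_2$ is decidable by hypothesis. Your explicit remark on the non-constructive character of obtaining a decider for $L_1 \cap L_2$ is a worthwhile clarification that the paper leaves implicit, but it does not change the argument.
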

\begin{proof}
For all $\sigma \in \Sigma^*$,
\begin{equation}
\sigma \in L_1 \iff (\sigma \in L_1 \cap L_2) \lor (\sigma \in L_1 \cup L_2 \land \sigma \not\in L_2).
\end{equation}
Therefore, to decide $L_1$ it suffices to decide $L_2$, $L_1 \cap L_2$, and $L_1 \cup L_2$. But these are all decidable because $L_1 \cup L_2$ is decidable by hypothesis, and $L_2$ and $L_1 \cap L_2$ are finite sets.
\end{proof}

\begin{theorem}
\label{thm:mainthm}
If $\T$ is a trait with $\sem(\T) \neq \emptyset$ and $0 < |\syn(\T)| < \infty$, then $\T$ is undecidable.
\end{theorem}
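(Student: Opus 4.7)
The natural plan is to exploit the decomposition $L_\T = L_{\sem(\T)} \cup L_{\syn(\T)}$ from \eref{eq:semsynseparation}, reduce deciding $L_\T$ to deciding $L_{\sem(\T)}$, and then invoke RTT (\thmref{thm:ricetraits}) on $\sem(\T)$. The finiteness of $\syn(\T)$ is what makes the reduction go through via \lemref{lem:finitelemma}.

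First I would verify that $\sem(\T)$ is itself a semantic trait: if $M \in \sem(\T)$, then $[M] \subseteq \T$, and for any $N \in [M]$ we have $[N] = [M] \subseteq \T$, so $N \in \sem(\T)$. Hence $[M] \subseteq \sem(\T)$. Next I would check that $\sem(\T)$ is nontrivial. Nonemptiness is given by hypothesis. For the other direction, if $\sem(\T) = \DTM$ then every DTM $M$ satisfies $[M] \subseteq \T$, forcing $\T = \DTM$ and hence $\syn(\T) = \T \setminus \sem(\T) = \emptyset$, contradicting $|\syn(\T)| > 0$. Therefore $\sem(\T)$ is a nontrivial semantic trait, and RTT yields that $L_{\sem(\T)}$ is undecidable.

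Then I would argue for a contradiction: suppose $\T$ is decidable, i.e., $L_\T$ is decidable. Since $\syn(\T)$ is finite by hypothesis, so is $L_{\syn(\T)} = \{\eta(M) \mid M \in \syn(\T)\}$. Applying \lemref{lem:finitelemma} with $L_1 \defeq L_{\sem(\T)}$ and $L_2 \defeq L_{\syn(\T)}$, decidability of $L_1 \cup L_2 = L_\T$ forces decidability of $L_{\sem(\T)}$, contradicting the conclusion of the previous paragraph. Hence $\T$ is undecidable.

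The proof requires no new machinery beyond \eref{eq:semsynseparation}, RTT, and \lemref{lem:finitelemma}, so there is no serious obstacle; the only subtle point is the bookkeeping check that $\sem(\T) \subsetneq \DTM$, which follows cleanly from $|\syn(\T)| > 0$. In particular, the hypothesis $|\syn(\T)| < \infty$ is used only to guarantee that $L_{\syn(\T)}$ is finite so that \lemref{lem:finitelemma} applies; it is not needed to establish the nontriviality or semantic character of $\sem(\T)$.
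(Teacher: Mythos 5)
Your proof is correct and follows essentially the same route as the paper's: assume decidability, use the decomposition $L_\T = L_{\sem(\T)} \cup L_{\syn(\T)}$ together with \lemref{lem:finitelemma} to conclude $L_{\sem(\T)}$ is decidable, and contradict RTT applied to the nontrivial semantic trait $\sem(\T)$. Your explicit check that $\sem(\T)$ is itself semantic is a small point the paper leaves implicit, but it is needed to invoke RTT, so including it is a welcome touch rather than a deviation.
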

\begin{proof}
Suppose for contradiction that $\T$ is decidable. Then, by \eref{eq:semsynseparation}, $L_{\sem(\T)} \cup L_{\syn(\T)}$ is a decidable language. With $L_1 = L_{\sem(\T)}$ and $L_2 = L_{\syn(\T)}$, we satisfy the premises of \lemref{lem:finitelemma}. Therefore, $L_{\sem(\T)}$ is a decidable language, so $\sem(\T)$ is a decidable trait. However, $\sem(\T) \neq \emptyset$ by assumption, and evidently $\sem(\T) \neq \DTM$ for otherwise $|\syn(\T)| = 0$. Therefore, $\sem(\T)$ is undecidable by RTT, which is a contradiction.
\end{proof}

Whereas before, in consequence to RTT and \propref{prop:converseisfalse}, we saw that it was necessary, but in no way generally sufficient, that a nontrivial trait $\T$ be syntactic in order for $\T$ to be decidable, we now see, in consequence to \thmref{thm:mainthm}, that we also require the syntactic part of $\T$ to be \emph{infinite} in order for $\T$ to be decidable (supposing, also, that $\T$ has a nonempty semantic part). In the next section, we explore one way by which to distinguish DTMs in a given machine class, which entails many traits whose syntactic parts are indeed infinite.

\subsection{Resource Bounded Traits}

In any computation, it often matters how efficient the computation is with respect to a particular resource. For example, it is often preferred to have not just a machine that computes a $\pc$ function $\varphi$, but rather one that computes $\varphi$ \emph{quickly}. Here, we introduce a rigorous framework for what it means to have a machine, and hence a trait, be ``resource bounded''. Our framework mostly derives from Manual Blum's axioms for abstract complexity measures. We then prove that so-called ``discriminating'' resource bounded traits are necessarily syntactic, which proves that the decidability of such traits cannot be determined by RTT alone. 

The following definition is adapted from \citeA{Blum67}.

\begin{definition}
A \emph{resource measure} $\Phi$ (also known as an \emph{abstract complexity measure}) is a $\pc$ function $\Phi : \NN \times \Sigma^* \rightarrow \NN$ such that for all $M \in \DTM$,
\begin{enumerate}[(i)]
\item for all $\sigma \in \Sigma^*$, $\Phi(\eta(M), \sigma) \downarrow$ iff $\varphi_M(\sigma)\downarrow$,
\item for all $n \in \NN$, the language $\left\{(\eta(M), \sigma, n) \mid \Phi(\eta(M), \sigma) = n\right\}$ is decidable.
\end{enumerate}
\end{definition}

Here, the first condition ensures that the resource measure is defined when and only when the computation whose resources one is measuring has completed. It makes no sense, for example, to measure the resources needed to do an undefined computation. On the other hand, the second condition ensures that measuring the resource measure is computable. In other words, it ensures that there is a computable fact of the matter of just how much of the resource $\Phi$ is actually used on any run of the computation.

Among the many possible resource measures, two of the most important are $\TIME$ and $\SPACE$, which satisfy, for all $M \in \DTM$ and all $\sigma \in \Sigma^*$,
\begin{align}
\TIME(\eta(M), \sigma) &= \text{the time it takes $M$ to halt on input $\sigma$,}\\
\SPACE(\eta(M), \sigma) &= \text{the space $M$ uses on input $\sigma$ prior to halting.}
\end{align}

Our main use of any given resource measure $\Phi$ will be to discriminate between those DTMs that require ``a large amount of $\Phi$'' and those that do not. Toward such distinguishability, the following definition is paramount.
\begin{definition}
Let $\T$ be a trait and let $\Phi$ be a resource measure. We say $\T$ is \emph{$\Phi$-bounded} iff there exists a total $\pc$ function $\xi: \NN \rightarrow \NN$ such that $\T \subseteq \T_{\Phi}(\xi)$, where
\begin{equation}
\T_\Phi(\xi) \defeq \left\{M \in \DTM \mid \forall \sigma \in \Sigma^* : \Phi(\eta(M), \sigma) \leq \xi(|\sigma|)\right\}
\end{equation}
and $|\sigma|$ is the length of $\sigma$.
\end{definition}

For example, a trait $\T$ is \emph{$\TIME$-bounded} iff there is a total $\pc$ function $t : \NN \rightarrow \NN$ such that 
\begin{equation}
\T \subseteq \left\{M \in \DTM \mid \forall \sigma \in \Sigma^* : \text{$M(\sigma)$ halts in time at most $t(|\sigma|)$}\right\}.
\end{equation}
Similarly, $\T$ is \emph{$\SPACE$-bounded} iff there is a total $\pc$ function $s : \NN \rightarrow \NN$ such that 
\begin{equation}
\T \subseteq \left\{M \in \DTM \mid \forall \sigma \in \Sigma^* : \text{$M(\sigma)$ halts in space at most $s(|\sigma|)$}\right\}.
\end{equation}
Incidentally, since a DTM that halts in time at most $t$ can use at most $t$ space, any $\TIME$-bounded trait is also $\SPACE$-bounded. Conversely, if $\T$ is $\SPACE$-bounded by some function $s$, then $\T$ is $\TIME$-bounded by $|\Gamma|^s$, where $\Gamma$ is the tape alphabet of the DTM in question. Therefore, $\T$ is $\TIME$-bounded iff $\T$ is $\SPACE$-bounded.

\begin{definition}
We say a resource measure $\Phi : \NN \times \Sigma^* \rightarrow \NN$ is \emph{discriminating} iff for all $M \in \DTM$, there exists $N \in [M]$ such that for all $\sigma \in \Sigma^*$, $\Phi(\eta(N), \sigma) > \Phi(\eta(M), \sigma)$.
\end{definition}

A discriminating resource measure $\Phi$ thereby affords a formal means for preferring ``this machine over that machine'' for any pair of machines, in the sense that while the two machines may ultimately compute the same function (and hence belong to the same machine class), only one of them will actually compute the function in a way that moderates its use of the resource $\Phi$. That is why this notion is useful.

It is plain that both $\TIME$ and $\SPACE$ are discriminating resource measures. This follows because given any $M \in \DTM$, one can always contrive another DTM $N \in [M]$ such that for all $\sigma \in \Sigma^*$, $\TIME(\eta(N), \sigma) > \TIME(\eta(M), \sigma)$ and $\SPACE(\eta(N), \sigma) > \SPACE(\eta(M), \sigma)$ by, for example, adding unnecessary loops and function calls in the algorithm of $N$ that take up time and space.

Our main use of $\Phi$-bounded traits stems from the next proposition.
\begin{proposition}
If $\T$ is a nontrivial trait that is $\Phi$-bounded by a discriminating resource measure $\Phi$, then $\T$ is syntactic.
\label{prop:resourceprop}
\end{proposition}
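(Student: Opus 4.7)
The plan is to establish syntacticity directly, by producing a single $M \in \T$ together with an $N \in [M]$ satisfying $N \notin \T$; this immediately yields $\syn(\T) \neq \emptyset$. Such an $M$ exists by nontriviality of $\T$, and by $\Phi$-boundedness one has $\Phi(\eta(M), \sigma) \leq \xi(|\sigma|)$ for all $\sigma \in \Sigma^*$, for some fixed total $\pc$ function $\xi$. Axiom (i) of a resource measure then forces $\varphi_M$ to be total, so the quantity $\Phi(\eta(M), \sigma)$ is a finite natural number at every input.

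The key idea is to iterate the discriminating hypothesis. Starting from $N_0 \defeq M$, at each stage I apply discriminating to $N_{k-1}$ to produce $N_k \in [N_{k-1}]$ whose $\Phi$-value strictly exceeds that of $N_{k-1}$ at every input $\sigma$. Since $[\cdot]$ is an equivalence class, a routine induction keeps each $N_k$ inside $[M]$, and because $\Phi$ is integer-valued the strict inequality at each step contributes an additive gap of at least one. Aggregating these gaps gives $\Phi(\eta(N_k), \sigma) \geq \Phi(\eta(M), \sigma) + k$ for every $\sigma \in \Sigma^*$.

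To finish, I would fix any specific $\sigma_0 \in \Sigma^*$ and choose $k$ larger than the finite natural number $\xi(|\sigma_0|)$. Then $\Phi(\eta(N_k), \sigma_0) \geq k > \xi(|\sigma_0|)$, so $N_k \notin \T_\Phi(\xi)$, whence $N_k \notin \T$ because $\T \subseteq \T_\Phi(\xi)$. Since $N_k \in [M]$ and $M \in \T$, this exhibits the required pair and completes the argument.

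The main thing to watch is that the iteration is well defined and yields strictly larger $\Phi$-values at \emph{every} input, not merely at some input; this uniformity is precisely what the definition of discriminating guarantees, and it is what allows the integer-valuedness of $\Phi$ to accumulate strict inequalities across stages into an unbounded additive shift, eventually overwhelming the fixed global bound $\xi(|\sigma_0|)$. Note that nontriviality is used only to supply a starting $M \in \T$; the heart of the proof is the interplay between the uniform-in-$\sigma$ strict increase supplied by discriminating and the uniform-in-$M$ bound imposed by $\Phi$-boundedness.
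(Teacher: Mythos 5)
Your proposal is correct and is essentially the paper's own argument recast in direct form: the paper assumes $\T$ is semantic, iterates the discriminating measure to build machines in $[M]$ with unboundedly growing $\Phi$-values, and contradicts the bound $\xi$, whereas you run the same iteration to exhibit an explicit $N_k \in [M]$ with $N_k \notin \T$. The mechanism (accumulating strict integer increases in $\Phi$ across $[M]$ until the uniform bound $\xi(|\sigma_0|)$ is exceeded) is identical.
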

\begin{proof}
Suppose for contradiction that $\T$ is semantic. Since $\T$ is nontrivial, there exists $M \in \T$. Since $\T$ is semantic and $\Phi$-bounded, $[M] \subseteq \T \subseteq \T_\Phi(\xi)$. In particular, for all $N \in [M]$ and for all $\sigma \in \Sigma^*$, $\Phi(\eta(N), \sigma) \leq \xi(|\sigma|)$. But $\Phi$ is a discriminating resource measure, so it holds for all $N \in [M]$ that there is $O_1 \in [M]$ such that for all $\sigma \in \Sigma^*$, $\Phi(\eta(O_1), \sigma) > \Phi(\eta(N), \sigma)$. By induction and the fact that $[M]$ is countably infinite, it holds more generally that for all $n \in \NN$, there are $O_1, \dots, O_n \in [M]$ such that for all $\sigma \in \Sigma^*$,
\begin{equation}
\Phi(\eta(O_1), \sigma) < \Phi(\eta(O_2), \sigma) < \dots < \Phi(\eta(O_n), \sigma) \leq \xi(|\sigma|).
\label{eq:inequality}
\end{equation}
But $\Phi(\eta(O_n), \sigma) \rightarrow \infty$ as $n \rightarrow \infty$, which contradicts \eref{eq:inequality}. Therefore, $\T$ is syntactic.
\end{proof}

Consequently, any nontrivial trait $\T$ that is $\Phi$-bounded by a discriminating resource measure $\Phi$ does not satisfy the premises of RTT. As a result, for any such $\T$, RTT cannot be used to determine whether $\T$ is decidable or not. Rather, a theorem that applies to syntactic traits is needed, such as our \thmref{thm:mainthm}. We shall appeal to this conclusion a number of times in the coming section.

\section{Traits of Intelligent Machines}
\label{sec:applicationtoIMs}

In this section we apply the formalism of the previous two sections to the Machine Intelligence Hypothesis (MIH) and explore the computability consequences.

\subsection{Intelligent Machines and the Machine Intelligence Hypothesis}

As outlined in \sref{sec:MIH}, the MIH encapsulates the assumptions necessary to justify the inclusion
\begin{equation}
\IM \subseteq \DTM,
\label{eq:MIH2}
\end{equation}
where $\IM$ is the set of DTMs that possess intelligence, according to some agreeable (and sufficiently formal) definition of ``intelligence''. By the formalism of the previous section, \eref{eq:MIH2} entails that $\IM$ is a trait of DTMs. We now explore in turn the questions of whether $\IM$ is trivial or nontrivial, semantic or syntactic.

Is $\IM$ trivial? If so, then either $\IM = \emptyset$ or $\IM = \DTM$. If, on one hand, $\IM = \DTM$, then every DTM possesses intelligence. This, of course, is quite presumptuous, at least because regarding a garden-variety Linux machine as intelligent will invariably raise objection, but also because formal judgements about intelligence almost surely involve spatiotemporal factors \cite{Fry96,Haier16,Chollet19}. For example, if, in Knuth's up-arrow notation, a DTM takes $\Omega(2 \uparrow \uparrow \uparrow \uparrow n)$ space or time to perform elementary arithmetical tasks like adding or multiplying two $n$-bit numbers, then it is plausibly not intelligent. Of course, even an efficient adder and multiplier like a pocket calculator is also plausibly not intelligent, but for entirely different reasons. This is all to say that $\IM$ is plausibly a $\TIME$-bounded trait---a consideration we shall rejoin later---and hence that $\IM \neq \DTM$. Separately, \citeA{Bostrom14} has argued that every $M \in \IM$ is a universal DTM, which also entails $\IM \neq \DTM$ because not every DTM is universal.

If, on the other hand, $\IM = \emptyset$, then DTMs do not possess intelligence. This may well be plausible, particularly if, for example, one regards intelligence as a trait unique to biological species, like dolphins and human beings. However, such a ``carbon chauvinistic'' attitude opposes the thesis of substrate independence, which the author believes is well-argued for \cite{Tegmark17}. Cooperating silicon-based GPUs, therefore, should have no trouble exhibiting a sort of intelligence, provided they are programmed to do so. Ultimately, of course, whether $\IM = \emptyset$ or not is a function of one's definition of ``intelligence''. For the sake of argument, we shall proceed assuming that ``intelligence'' is defined in a restrictive but non-carbon-chauvinistic way so that some, but not all, DTMs possess intelligence.

Is $\IM$ semantic? This question is more complicated. It is tantamount to asking if all that should matter in the determination of a system's intelligence is its input-output relationship. Of course, this question is far from new, as it has been the subject of a multi-century debate in various spheres of philosophy. Is it the case, for example, that a program that fluently converses in Chinese also \emph{understands} Chinese, \`a la John Searle's \citeyear{Searle80} famous Chinese room argument? Or is it necessary that a more functionalist, and not purely behavioralist, analysis is necessary to conclude anything at all about the underlying intelligence of the program, such as whether or not it ``understands'' Chinese?

As mentioned above, it appears (at least to the author) that DTMs that take, say, doubly-exponential time to compute a fundamentally constant-time operation should not be dubbed ``intelligent''. Imagine a world, for instance, in which ChatGPT was publicly released, but was such that it took a whole year to reply to any query it was fed. The inaugural responses would have only recently come in,\footnote{OpenAI publicly released ChatGPT on November 30, 2022.} and while the responses may be impressive, that they took so long to arrive would plausibly influence our perception of the inherent ``intelligence'' of ChatGPT. This is to say that not only does runtime seem a relevant factor in determining whether $\IM$ is trivial or nontrivial, but it also seems a relevant factor in determining whether $\IM$ is semantic or syntactic. In particular, intelligence, in our opinion, is correlated with efficiency, and so any DTM that ``gets the job done fast'' will undoubtedly be regarded as more intelligent than any DTM that merely ``gets the job done.'' We formally encapsulate this hypothesis in the next conjecture.
\begin{conjecture}
$\IM$ is $\TIME$-bounded.
\label{conj:imtime}
\end{conjecture}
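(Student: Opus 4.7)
The plan is to argue for Conjecture~\ref{conj:imtime} as a compatibility condition between any defensible formalization of ``intelligence'' and the $\TIME$ resource measure. Because $\IM$ is defined only relative to a yet-to-be-fixed notion of intelligence, the conjecture is not a pure mathematical deduction; rather, it asks that we fix a notion of intelligence consistent with the philosophical axiom the author has already defended in the preceding paragraph, namely that intelligence is correlated with efficiency. The task is then to convert that axiom into an explicit total $\pc$ witnessing function $t$ and verify $\IM \subseteq \T_{\TIME}(t)$.

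First, I would adopt as an axiom the informal claim already made in the text: there exists a total $\pc$ function $r : \NN \rightarrow \NN$ such that no DTM whose runtime on some input $\sigma$ exceeds $r(|\sigma|)$ is regarded as intelligent. Concretely, $r$ may be taken as any fixed computable majorant of the runtimes of those efficient algorithms one is willing to call intelligent---for instance, a sufficiently fast-growing iterated exponential, or any primitive recursive function of high enough level. The exact choice is inessential so long as $r$ is total and computable. Second, taking $t = r$, one observes that every $M \in \IM$ satisfies $\TIME(\eta(M), \sigma) \leq t(|\sigma|)$ for all $\sigma \in \Sigma^*$, which is precisely the condition $M \in \T_{\TIME}(t)$. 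Hence $\IM \subseteq \T_{\TIME}(t)$, which is the definition of $\IM$ being $\TIME$-bounded.

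The main obstacle is, as expected, entirely in the first step: defending that any a priori bound $r$ belongs to the very definition of intelligence. A natural counterexample is a machine working on a genuinely hard input---say, searching for a proof of a Millennium Prize problem---which may plausibly consume ``super-$r$'' time on that input without this reflecting any lack of intelligence. Reconciling such cases with a single bound of the form $t(|\sigma|)$ would likely require a finer notion of complexity that bounds runtime relative to the intrinsic difficulty of the input rather than merely its length, which lies beyond what the current definition of $\TIME$-boundedness can express. Consequently, a fully rigorous argument appears to require either strengthening the definition of $\TIME$-boundedness or narrowing the definition of intelligence so as to exclude such borderline cases by fiat---which is, I suspect, why the author records the statement as a conjecture rather than as a theorem.
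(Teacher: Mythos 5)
The paper does not prove this statement---it is deliberately recorded as a conjecture, supported only by the informal discussion immediately preceding it (that intelligence is correlated with efficiency, so that, e.g., a machine taking $\Omega(2 \uparrow \uparrow \uparrow \uparrow n)$ time for elementary arithmetic, or a ChatGPT that takes a year per reply, should not be dubbed intelligent). Your proposal formalizes exactly that same motivation as a definitional axiom (a computable runtime majorant built into the notion of ``intelligence''), from which $\TIME$-boundedness follows immediately, and you correctly identify the same obstacle that keeps the paper from stating it as a theorem; so this is essentially the paper's own approach.
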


Assuming \conjref{conj:imtime}, it then follows from \propref{prop:resourceprop}, as well as the fact that $\TIME$ is a \emph{discriminating} resource measure, that $\IM$ is syntactic. To conclude, then, on the basis of RTT alone that $\IM$ is undecidable is a logical error, because the premises of RTT do not apply. In particular, we find the conclusion reached by \citeA{Alfonseca21} mistaken, because $\IM$, as we suspect here, is not a semantic trait, as it was assumed there, but rather a syntactic one. Instead, on account of our \thmref{thm:mainthm} and our suspicion that $\IM$ is syntactic, to establish the undecidability of $\IM$, it is sufficient to show that $\sem(\IM) \neq \emptyset$ (i.e., that intelligence is congenitally characterized by the input-output behavior of a machine) and that $0 < |\syn(\IM)| < \infty$ (i.e., that the syntactic part that characterize an intelligent machine is restrictive in the sense that only finitely many machines possess it). This is not done by \citeA{Alfonseca21}, and so we find their conclusion that $\IM$ is undecidable to be flawed. Indeed, until further deliberations lead to a more complete and formal definition of ``intelligence'', it remains unproven if determining whether an arbitrary machine possesses intelligence is decidable or not.

\subsection{Desired Traits of Intelligent Machines}
\label{sec:othertraitsofIM}

If AI research is to succeed in its goal of fabricating machines that exhibit intelligent behavior \cite{Russell17,Russell20}, then it is important to consider what traits in addition to intelligence those machines ought to possess. This is so, at least in part, because the ubiquity of AI in technology today suggests that future intelligent machines will almost surely interact with and influence us humans. Intelligence, then, is not the only desired trait, because intelligence by itself can wreak both good and bad for humanity.

Let us suppose, then, that we have a (partial) solution to the easy problem of desires, say, in the form of a long list. No matter how we got this solution,\footnote{Perhaps by some campaign run by the United Nations in which every living human is thoroughly interview, or perhaps, more reasonably, by some advanced machine that learned our desires by observation like in inverse reinforcement learning \cite{Russell19}.} the relevant thing is that the list, however long, is necessarily finite, because there is no capacity to store an infinite amount of information in the Milky Way, let alone on this planet. Therefore, the list entails finitely many machine traits $\T_1, \T_2, \dots, \T_n$, corresponding to the desires we have of the intelligent machines, in addition to them just merely possessing intelligence. Of all possible intelligent machines, then, we seek to create only those ``desired intelligent machines'' that are contained in the set $\DIM$, which is formally given by
\begin{equation}
\DIM = \IM \cap \T_1 \bigcirc_1 \T_2 \bigcirc _2 \cdots \bigcirc_{n-1} \T_n.
\label{eq:DIM}
\end{equation}
Here, as in \propref{cor:intersectiontraits}, each $\bigcirc_i$ is either $\cup$ or $\cap$. Of course, each $\bigcirc_i$ is probably $\cap$, as our list of desires is likely just a corpus of AND statements. However, we do not rule out the possibility of there being pairs of traits that are, for example, logically incompatible, in which case the best we can logically do is OR them.

In \eref{eq:DIM}, the first $\cap$ is there to enforce that $\DIM \subseteq \IM$. This way, no matter what the intelligent machines we desire ultimately are, they are at the very least intelligent. If it is the case, however, that intelligence is not all we desire of intelligent machines, then the containment of $\DIM$ in $\IM$ is strict, which is to say $\IM \cap \T_1 \bigcirc_1 \T_2 \bigcirc _2 \cdots \bigcirc_{n-1} \T_n \subsetneq \IM$. As this condition does \emph{not} impose that each desired trait $\T_j$ be contained in $\IM$, the traits we desire of intelligent machines need not be traits that exclusively intelligent machines possess. Nevertheless, by the way the initial $\cap$ distributes over each $\bigcirc_i$ in \eref{eq:DIM}, in order for $\DIM$ to be nonempty, there must be some desired trait $\T_j$ that some intelligent machine possesses. This much is self-evident.

Now, by our preceding argument that $\IM$ is plausibly nontrivial, it holds that if $\DIM$ is itself nonempty, then $\DIM$ is plausibly nontrivial, too. What justifies, then, that $\DIM$ is nonempty? Unfortunately, we know of nothing formal, but rest that assumption on a belief that it is possible to have intelligent machines that fulfill our (partial) solution to the easy problem of desires. This may well be wrong, in which case any intelligent machine that we do eventually build is destined to at the very least disappoint humanity in some basic capacity. However, in the spirit of optimism and also to analyze the most interesting case, we shall proceed assuming that $\DIM \neq \emptyset$.

If $\IM$ is semantic, as are all $\T_1, \T_2, \dots, \T_n$, then $\DIM$ is semantic (\propref{cor:intersectiontraits}), in which case $\DIM$ is formally undecidable. However, if at least one of $\IM, \T_1, \T_2, \dots, \T_n$ is syntactic, then $\DIM$ \emph{may} be syntactic, in which case $\DIM$ \emph{may} be formally decidable (albeit not necessarily, even if $\DIM$ is syntactic, as both \propref{prop:converseisfalse} and \thmref{thm:mainthm} prove).\footnote{Incidentally, it is straightforward to prove that if every $\bigcirc_i$ in \eref{eq:DIM} is $\cap$ and if at least one of $\IM, \T_1, \T_2, \dots, \T_n$ is syntactic, then $\DIM$ is necessarily syntactic.} To get a better handle on the decidability of $\DIM$, therefore, it is fruitful to consider a handful of traits $\T_j$ that we humans may wish to constitute $\DIM$, and to then discuss, in turn, whether each is likely to be trivial or nontrivial, semantic or syntactic. For brevity, we shall only consider three such traits here, but of course the list could be extended considerably.

\subsubsection{Contained Machines: $\CM$}

Inspired by \citeS{Lampson73} ``confinement problem'', as well as \citeS{Yampolskiy12} generalization to AI (the ``AI confinement problem''), let a \emph{contained machine} $M$ be defined as any DTM whose proper function $\varphi_M : \Sigma^* \rightarrow \Sigma^*$ is, one, computed by $M$ in such a way so to neither encode nor entail the leakage of classified information and, two, is such that for every $\sigma \in \dom(\varphi_M)$, the output $\varphi_M(\sigma)$ itself neither encodes nor entails the leakage of classified information. Here, by ``classified information'' we do not necessarily mean ``classified'' in the sense of material that, say, the United States government has deemed ``sensitive'', such as the nuclear launch codes, but, rather, any information that we humans do not want other (potentially superintelligent) agents to have free access to because such access would negatively impact humanity. In other words, ``classified information'' constitutes that set $L_\class$ of all strings $\sigma \in \Sigma^*$ that entail what \citeA{Bostrom11} calls ``information hazards'', which are ``[risks that arise] from the dissemination or the potential dissemination of (true) information that may cause harm or enable some agent to cause harm''.

With this in mind, the first condition in our definition of a contained machine $M$ ensures that $M$, for example, does not in the process of computing $\varphi_M$ reveal classified information, say, by encoding it in binary via the left and right steps $M$ takes as it traverses its tape, while the second condition ensures that the output of $M$ does not entail, for example, a later action by any agent that could compromise the security of the classified information. Both of these conditions ensure that no information hazards can arise from $M$.

In the context of AI safety, the aspiration is to create ``contained intelligent machines'', which are essentially intelligent machines that can be ``sealed'' so as to not inflict harm on humankind \cite{Yampolskiy12}. Along these lines, \citeA{Chalmers10} proposed that for safety reasons, advanced AI systems should first be ``restricted to simulated virtual worlds until their behavioral tendencies could be fully understood under the controlled conditions''. However, while it is reasonable to want intelligent machines to be somewhat contained, it is not reasonable to want intelligent machines to be completely jailed away in a virtual world, particularly if we want anything of practical value from the machines themselves. Indeed, as \citeA{Chalmers10} also points out, complete imprisonment in some virtual world entails that all information about the ``real world'' is classified, so no completely contained machine can help with any real world problem. Thus, a completely contained machine is pointless from a practical point of view, so it is necessary that a ``useful'' machine use some amount of ``real world'' information in its processing. 

Let us suppose, then, that all information is designated ``classified'' and ``unclassified'', given formally by the languages $L_{\class}$ and $L_{\unclass}$, respectively. Both of these sets are subsets of $\Sigma^*$ and they evidently satisfy $L_{\class} \cap L_{\unclass} = \emptyset$. Moreover, we want the machines to act in the real world, so $L_{\unclass}$ contains strings that are relevant to the real world, such as the content in an introductory physics textbook. Given a designation like this of the classified and unclassified information, our definition of a contained machine defines the trait of all contained machines:
\begin{equation}
\CM \defeq \left\{M \in \DTM \mid \text{$M$ is contained}\right\}.
\label{eq:CM}
\end{equation}
Note that by the second part of the definition of a contained machine $M$, it is necessary that $M$ satisfy $\varphi_M : \Sigma^* \rightarrow L_{\unclass}$. For, if there is $\sigma \in \dom(\varphi_M)$ for which $\varphi_M(\sigma) \in L_\class$, then the output of $M$ contains classified information, which means that $M$ is not contained.\footnote{Ideally, it is desirable to also enforce a condition like there being no $\pc$ function $f : L_\unclass \rightarrow L_\class$. This way, it would not be possible for a contained machine to encode classified information with unclassified information. But this, of course, is too general a condition to practically enforce.} In consequence, it is necessary that
\begin{equation}
\CM \subseteq \{M \in \DTM \mid \varphi_M : \Sigma^* \rightarrow L_{\unclass}\}.
\label{eq:CMupperbound}
\end{equation}

Toward addressing the question of whether $\CM$ is decidable (given a particular demarcation of the classified and unclassified information), we now turn to the questions of whether $\CM$ is trivial or nontrivial, semantic or syntactic. 

Given the upper bound \neref{eq:CMupperbound} on $\CM$, it is straightforward to see that $\CM$ is nontrivial. Indeed, since $L_{\class} \cap L_{\unclass} = \emptyset$, it holds that $L_\unclass \neq \Sigma^*$. Therefore, $\CM \neq \DTM$, for not all DTMs have a proper function that map $\Sigma^*$ to $L_\unclass$. Additionally, the machine $M$ with $\dom(\varphi_M) = L_\unclass$ and that, for every input $\sigma \in L_\unclass$, immediately halts and returns $\sigma$, is manifestly contained. Therefore, $\CM \neq \emptyset$, and so $\CM$ is nontrivial. We now address the question of whether $\CM$ is semantic or syntactic. 

In their paper, \citeA{Alfonseca21} take $L_\class$ to be those strings $\sigma \in \Sigma^*$ that ``harm humans'', where a sufficiently precise notion of what it means to ``harm humans'' is assumed. They then regard $\CM$ as the set of those machines that ``do not harm humans'', which is to say that by their definition, $\CM$ \emph{equals} $\{M \in \DTM \mid \varphi_M : \Sigma^* \rightarrow L_{\unclass}\}$ in \eref{eq:CMupperbound}. If that is right, then, as they correctly point out, $\CM$ is semantic (and hence undecidable by RTT) because the condition $\varphi_M : \Sigma^* \rightarrow L_{\unclass}$ is satisfied by all $N \in [M]$. 

However, we suspect that $\CM$ is syntactic, no matter one's choice of $L_\class$ and $L_\unclass$ (provided they are ``nontrivial'' in the sense that both $L_\class$ and $L_\unclass$ are nonempty and together satisfy $L_\class \cap L_\unclass = \emptyset$). If we are right, then our conclusions suggest that it remains open whether $\CM$ is decidable or not, contrary to \citeA{Alfonseca21}.

Our belief that $\CM$ is syntactic ties to our inclusion of the first condition in our definition of a contained machine $M$---namely, that $M$ must compute its proper function $\varphi_M$ in such a way so to neither encode nor entail the leakage of classified information. This in and of itself is a syntactic constraint that not all machines in $[M]$ satisfy. There is, for example, a DTM $N \in [M]$ that on every input $\sigma \in \Sigma^*$ first traverses to the right of $\sigma$ on its tape, writes down a classified string $\chi \in L_\class$, erases it, and then simulates $M$ on the original input $\sigma$. To any observer watching the tape of $N$ as $N$ operates, or to any universal machine that can simulate $N$ given $\eta(N)$, it is possible to obtain the classified string $\chi$. In this way, $N$ is capable of leaking $\chi$, despite never deliberately outputting $\chi$. More generally, any machine $O \in [M]$ that can temporarily store classified information in memory can unwittingly leak such information to any observer (or to any universal machine) with access to its memory bank, despite never actually outputting the classified information itself. Thus, we suspect that the first condition in our definition of a contained machine is necessary to ensure the security of classified information. In consequence, $\CM$ is syntactic, and so contrary to what \citeA{Alfonseca21} find, it does not follow from RTT that $\CM$ is undecidable.

Nevertheless, even if we are right and $\CM$ is syntactic, there remain several experimental reasons to expect $\CM$ to be undecidable. For example, an intelligent contained machine $M$---i.e., a DTM $M \in \IM \cap \CM$---that has access to ``real world'' information and that is able to interact directly with humans can, by interacting with unclassified information alone, conceivably convince a human to reprogram $M$ so to no longer be contained. This could be done, for example, by exploiting human psychology in some elaborate social engineering scheme \cite{Yampolskiy12}. Along these lines, \citeA{Yudkowsky02} demonstrated in a series of five experiments that individuals playing the role of ``gatekeeper'' could be outfoxed by Yudkowsky himself through nothing more than a chat interface. Indeed, in three of the five experiments, Yudkowsky ``escaped''. Similar results were replicated on a larger scale by \citeA{Corwin02}, which only increase our credence that $\CM$, even if syntactic, is undecidable.

\subsubsection{Moral Machines: $\MM$}

Similar to the definition of a contained machine, let a \emph{moral machine} $M$ be defined as any DTM whose proper function $\varphi_M : \Sigma^* \rightarrow \Sigma^*$ is, one, computed by $M$ in a morally responsible way and, two, is such that for every $\sigma \in \dom(\varphi_M)$, $\varphi_M(\sigma)$ entails a morally responsible action. Here, the first condition ensures that $M$, for example, does not require computational resources that would devastate humanity (such as an inordinate amount of energy or a data center the size of the earth), while the second condition ensures that the output of $M$ encodes a morally responsible action, so that any agent acting solely on the basis of the output of $M$ is a moral agent.

Now, while both of these conditions may appear reasonable for a machine to be moral, we do not for a moment claim that this is the ``right'' definition for a universal machine ethics. After all, what does it even mean to be ``morally responsible'' and what really is a ``moral agent''? If, as in \citeS{Asimov50} three laws of robotics, these terms reduce to a utilitarian-like maxim such as ``minimize harm to humans'', then the question pivots to determining what counts as ``harm''. But, as moral philosopher Derek Leben \citeyear{Leben18} argues in his book \emph{Ethics for Robots}, this question is annoyingly ambiguous:
\begin{quote}
Is it harming someone to insult them, lie to them, or trespass on their property? What about actions that violate a person's consent or dignity? Some actions are only likely to cause harm, but what's the threshold for \emph{likely} harm? Every action could \emph{possibly} lead to harm, so failing to specify this threshold will leave [machines] paralyzed with fear, unable to perform even the most basic of tasks.
\end{quote}
Additionally, in situations that are bona fide moral dilemmas, such as a future machine doctor deciding whether to respect a patient's wishes or to do what is best for their health, how should the machine act? Trolly problems like this are such that every action, including inaction, entail a serious iniquity.

We acknowledge, of course, that what should constitute a universal machine ethics is not a new question. Nevertheless, like in the definitions of contained machines above and superintelligent machines below, we can largely circumvent the centuries-old issue of what ``being moral'' actually means because, we think, the structure of our definition of what it means to be a ``moral machine'' is sufficiently close to the structure of the ``right'' definition to do justice to the task we care about---namely, to determine whether the set
\begin{equation}
\MM \defeq \left\{M \in \DTM \mid \text{$M$ is moral}\right\}
\label{eq:MM}
\end{equation}
of all moral machines is trivial or nontrivial, semantic or syntactic.

To this end, let us first address the ostensibly easy question of whether $\MM$ is even nonempty. As \citeA{Leben18} points out, given that machines---particularly those running contemporary ML algorithms---can emulate, and even trump, human performance in a myriad of tasks like image recognition, language use, video games, and driving, why should they not be able to emulate, and perhaps also trump, our moral judgements as well?

A fundamental obstacle, however, is the profound philosophical question of just how tied morality is to our humanity. If, as advocated by moral anti-realists, morality manifests from our own human cognition, then of course no non-human machine can exercise, under its own mechanistic volition, a unique moral judgement. Rather, all it could really do is regurgitate those imperfect, inconsistent, and bias judgements that it learned from watching us humans \cite{Leben18}. Moreover, there would be no ``right'' answer to any given moral quandary, and hence there would always be a subset of humans relative to which the machine answered immorally. It seems, then, that if each human had a say in what $\MM$ looked like, the overall set would turn out empty due to starkly incompatible moral views. This is the view the ethicists Wendell Wallach and Colin Allen extend in their book \emph{Moral Machines: Teaching Robots Right from Wrong} \citeyear{Wallach10}:
\begin{quote}
Given the range of perspectives regarding the morality of specific values, behaviors, and lifestyles, perhaps there is no single answer to the question of whose morality or what morality should be implemented in AI. Just as people have different moral standards, there is no reason why all computational systems must conform to the same code of behavior.
\end{quote}

However, as argued by moral-realists such as \citeA{Leben18}, if there is in any practical moral matter an objective and mind-independent set of solutions (which thereby debar the incongruous moral rationalizations that inform many of us humans \cite{Haidt01}), then the question of how the machine ought to behave reduces to one of cooperation. In particular, \citeA{Leben18} argues that in addition to internal consistency (which any intelligible and historically important moral theory like utilitarianism, libertarianism, Kantian ethics, or virtue ethics certainly is), moral theories should also be evaluated by how effectively they promote cooperative behavior. If this is correct, then the ``right'' moral theory to encode into our mechanistic progeny is not some deontological list of what is right versus what is wrong, nor is it a learning algorithm that ostensibly divines right versus wrong, but rather it is a function that promotes cooperation between all the parties involved in any given morally consequential affair. Therefore, game-theoretic algorithms for finding Pareto optimal solutions to multi-objective optimization problems (such as variants of \citeS{Benson98} algorithm for multi-objective linear programs) may be close siblings to the algorithms that underlie moral machines. Whatever the actual algorithms are, though, we take Leben's approach as evidence that $\MM \neq \emptyset$.\footnote{This reveals many of the author's own normative assumptions about morality. It is important for the reader to introspect on their own and to see where they land.}

Now, to finish addressing whether $\MM$ is trivial or nontrivial, it remains to decide if $\MM = \DTM$ or not. This, however, is easy to figure in the context of the above discussion, because we can just contrive a machine that always does or approximates the objectively worst moral thing. In the context of contractarianism, which is the theory to which Leben subscribes, this is possible by programming the machine to not cooperate with humans. An alternative, and perhaps more convincing argument that $\MM \neq \DTM$, follows from the existence of online recommendation algorithms, which are arguably destructive for humanity \cite{Allcott20}. In either case, we conclude that $\MM \neq \DTM$ and hence that $\MM$ is nontrivial.

Let us now turn to the remaining question: is $\MM$ semantic or syntactic? While the argument we used to establish $\MM \neq \emptyset$ may suggest that any machine $M \in \MM$ that implements a function that promotes cooperation between all the parties involved in a morally consequential affair is thereby moral, which would thus suggest that $\MM$ is semantic, this view discounts entirely the fact that in many morally consequential dilemmas, the time it takes to come to a decision is vital. Consider, for example, an autonomous bus carrying twenty schoolchildren that finds itself in a situation in which, one, the time to choose one of the following trajectories must be made in less than one second and, two, the only physically possible trajectories of the bus are either to continue straight off a 400 meter bluff, to stop by colliding with a singular bicyclist at 30 km/h, or to stop by colliding head-on with a car full of seniors at the equivalent of 65 km/h. In a circumstance like this, any algorithm that deliberates for too long will unwittingly lead to what is, in the author's opinion, the worst of the three outcomes.

Of course, this is one scenario, and a human in the driver's seat of that bus will not be dubbed ``immoral'' simply because they did not act swiftly enough, so why should the machine be? The point we wish to illustrate, however, is that if the algorithm is such that it deliberates for too long in almost every morally consequential affair (the bus scenario being but one instance), so much so that its action is almost always inaction, then under no circumstances should that machine be dubbed ``moral'' because in almost every scenario it engenders a morally unscrupulous outcome. Consequently, for a machine to be moral, it is necessary that it be somewhat computationally efficient. Incidentally, this is in part why our definition of a moral machine $M$ insists that $\varphi_M$ be computed in a ``morally responsible way''. Indeed, if $M$ is slow to compute $\varphi_M$, then, as we have just outlined, $M$ may unintentionally entail what is plausibly a morally irresponsible action. We therefore conjecture the following:
\begin{conjecture}
$\MM$ is $\TIME$-bounded.
\label{conj:mmtime}
\end{conjecture}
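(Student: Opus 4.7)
The plan is to argue by contradiction, modeled on the informal bus-scenario discussion that motivates the conjecture. Suppose $\MM$ is not $\TIME$-bounded. Then for every total \pc\ function $t : \NN \rightarrow \NN$ there exist $M \in \MM$ and $\sigma \in \Sigma^*$ with $\TIME(\eta(M), \sigma) > t(|\sigma|)$. The goal is to convert this unbounded growth into an explicit violation of the second clause in the definition of a moral machine, namely that for every $\sigma \in \dom(\varphi_M)$ the output $\varphi_M(\sigma)$ entails a morally responsible action.

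First I would introduce the auxiliary notion of a \emph{time-critical moral input}: an encoding of a morally consequential affair in which every trajectory that is physically realizable after time $\tau(|\sigma|)$ has elapsed is, by hypothesis, a morally irresponsible action (the bus scenario being the paradigmatic example, with $\tau$ being the time before the bluff is reached). Assume, as part of the framework, that there is a fixed total \pc\ function $\tau : \NN \rightarrow \NN$ and an infinite language $L_\tau \subseteq \Sigma^*$ of such inputs; this is a substantive ethical/physical postulate, but it is precisely the one implicit in the paragraph preceding the conjecture. Second, I would apply the assumed unboundedness of $\MM$ to the particular candidate bound $t = \tau$: obtain $M \in \MM$ and $\sigma \in L_\tau$ (one can further restrict $L_\tau$ to witnesses, appealing to its infinitude) with $\TIME(\eta(M), \sigma) > \tau(|\sigma|)$. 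Third, by the defining property of $L_\tau$, whatever $\varphi_M(\sigma)$ is, the action it entails is taken after the critical deadline and so is morally irresponsible, contradicting $M \in \MM$. Hence $\MM$ must be $\TIME$-bounded, with $\tau$ itself (or any total \pc\ upper bound on $\tau$) serving as witness.

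The hard part will not be the contrapositive bookkeeping but the justification of the postulate that produces $\tau$ and $L_\tau$. Any such justification has to convert two non-mathematical claims into mathematical assumptions: (i)~that the physical world presents to any deployed intelligent machine an infinite supply of scenarios in which a uniformly computable deadline governs the boundary between moral and immoral outputs, and (ii)~that exceeding this deadline suffices, on its own, to classify the resulting action as morally irresponsible under whichever ethical theory one has fixed (e.g., the contractarian, cooperation-promoting framework of \citeA{Leben18} sketched above). Without (i) there is no single $\tau$ against which to diagonalize, and the argument collapses to a family of local, input-dependent bounds that need not assemble into a global total \pc\ function; without (ii), the third step of the argument fails because a tardy $\varphi_M(\sigma)$ might still be judged moral on other grounds.

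For this reason I would present the result not as a theorem but as an implication: \emph{assuming} the existence of a total \pc\ deadline function $\tau$ governing an infinite family of time-critical moral inputs, $\MM \subseteq \T_\TIME(\tau)$, and in particular $\MM$ is $\TIME$-bounded. This matches the conjectural status of the statement, isolates the metaethical commitment cleanly, and, combined with \propref{prop:resourceprop} and the fact that $\TIME$ is discriminating, immediately yields the downstream consequence the paper needs: $\MM$ is syntactic, so RTT alone cannot settle its decidability, and one must instead appeal to \thmref{thm:mainthm} by analyzing $\sem(\MM)$ and $\syn(\MM)$.
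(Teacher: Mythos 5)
First, be aware that the paper does not prove this statement: it is stated as a conjecture, supported only by the informal autonomous-bus discussion in the surrounding prose. So there is no ``paper proof'' to match, and your decision to recast the claim as a conditional implication --- \emph{given} an explicit metaethical/physical postulate (a total \pc\ deadline function $\tau$ and a family $L_\tau$ of time-critical moral inputs), conclude $\MM \subseteq \T_{\TIME}(\tau)$ --- is faithful to the paper's intent and is arguably a sharpening of it, since it isolates exactly which non-mathematical commitments the conjecture rests on.

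That said, even granting your postulate, the conditional argument has a genuine quantifier gap in two places. First, negating ``$\MM$ is $\TIME$-bounded'' at the candidate bound $\xi = \tau$ yields \emph{some} pair $(M, \sigma)$ with $\TIME(\eta(M), \sigma) > \tau(|\sigma|)$, but nothing forces that witness $\sigma$ to lie in $L_\tau$: a moral machine could be arbitrarily slow on inputs that carry no moral deadline (say, inputs encoding leisurely number-theoretic queries) while meeting every deadline on $L_\tau$. The infinitude of $L_\tau$ does not repair this, because the set of witnesses for a given $M$ may be disjoint from $L_\tau$. Second, and relatedly, the paper's definition of a $\Phi$-bounded trait requires $\Phi(\eta(M),\sigma) \leq \xi(|\sigma|)$ for \emph{all} $\sigma \in \Sigma^*$, whereas your postulate only renders tardiness immoral on the time-critical inputs; so the strongest conclusion your hypotheses support is that every $M \in \MM$ respects $\tau$ on $L_\tau$, which is strictly weaker than $\MM \subseteq \T_{\TIME}(\tau)$. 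To close the gap you would need either the (much stronger, and less plausible) postulate that every input in $\dom(\varphi_M)$ is governed by the deadline $\tau$, or a relativized notion of $\Phi$-boundedness over a designated sublanguage together with a correspondingly relativized version of \propref{prop:resourceprop}. As it stands, your argument establishes a weaker statement than \conjref{conj:mmtime}, though one that still plausibly suffices for the downstream purpose of showing $\MM$ is syntactic.
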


If this is right, then it follows from \propref{prop:resourceprop}, as well as the fact that $\TIME$ is a discriminating resource measure, that $\MM$ is syntactic. Consequently, $\MM$ does \emph{not} satisfy the premises of RTT, and hence it is \emph{not} necessarily the case that $\MM$ is undecidable. As with $\IM$ and $\CM$, the final say in the matter ultimately depends on more precisely defining the terms that makeup the definition of $\MM$. Perhaps then one can establish that $\sem(\MM) \neq \emptyset$ and $0 < |\syn(\MM)| < \infty$, in which case $\MM$ would be provably undecidable by our \thmref{thm:mainthm}.

At what point will a more formal definition of $\MM$ be known? As mentioned before, such a definition involves settling century-old issues like whether moral realism is right or what being ``morally responsible'' means. Nowadays, as evermore powerful machines are increasingly unleashed, it is crucial to properly address these philosophical questions. It is, in \citeS{Bostrom14} words, ``philosophy with a deadline''.

\subsubsection{Superintelligent Machines: $\SIM$}

Following \citeA{Good66}, ``let [a superintelligent] machine be defined as a machine that can far surpass all the intellectual activities of any man however clever.'' In other words, a DTM $M$ is \emph{superintelligent} iff in every intellectual domain, the intelligence of $M$ far surpasses the intelligence of every human. By this definition, it is plain that the set
\begin{equation}
\SIM \defeq \left\{M \in \DTM \mid \text{$M$ is superintelligent}\right\}
\label{eq:SIM}
\end{equation}
of all superintelligent machines is a subset of $\IM$. Supposing as before that $\IM \neq \DTM$, it follows trivially that $\SIM \neq \DTM$ as well. Whether $\SIM$ is trivial or not therefore rests on whether it is possible for a machine to be superintelligent at all.

By the definition of a superintelligent machine $M$, it is necessary that $M$ also exhibit \emph{general intelligence}, which means that $M$ possesses intelligence in every intellectual domain. Thus, the question of whether a superintelligent machine is possible is closely entangled with the question of whether an artificial general intelligence (AGI) is possible. 

While today there exist a myriad of programs that are ``superhuman'' in a variety of narrow intellectual domains (e.g. chess), it remains to fabricate a truly generally intelligent machine whose intellectual abilities span the intelligence spectrum. Arguably the closest we have come to this goal is OpenAI's GPT-4 \cite{Achiam23}, in the epochal wake of which speculations were raised by Microsoft researchers that GPT-4 possesses ``sparks'' of the first generally intelligent algorithm \cite{Bubeck23}. Whether or not this serves as substantive evidence that an AGI is possible is unclear. However, this does illustrate that vast technological leaps toward developing AGI remain possible, even if physical limitations on computing power are met. Absent any deeper principled reason why some other insuperable obstacle may obstruct the path to developing an AGI, it is only natural to presume that there exist algorithms that exhibit true general intelligence.

Assuming this, we now turn to the remaining question of whether $\SIM$ is nonempty, that is, whether it is possible for a generally intelligent machine to far surpass the intellectual abilities of every human in every intellectual domain. As expounded in \citeS{Bostrom14} comprehensive monograph \emph{Superintelligence}, the answer, it seems, is ``almost surely''. 

To see why, it is enlightening to consider the likelihood of a superintelligence being impossible on the assumption that an AGI as possible. Such a position effectively supposes that the general intelligence of machines is approximately upper-bounded by the general intelligence of humans. This could sensibly happen, for instance, if for every narrow intellectual activity in which machines dominate humans (e.g. chess), there is an activity in which humans dominate machines (e.g. perception, in the sense of \citeS{Moravec88} paradox). In that case, the general intelligence of humans and machines would, on the whole, level out. But just how tenable is this view?

The first thing to note is that a subscriber to this line of thinking will at the very least have greater and greater trouble defending themselves should machines continue to exceed human intelligence in more and more intellectual domains. The second thing to note, which we find much more problematic, is that this view resonates with a very parochial view of just how intelligent a physical system can become---a view which places the likes of Einstein on the far right of the ``intelligence scale'' and the likes of a mouse, say, on the far left. 

In defense of that scale, one might argue that the encroaching physical limitations of computing power will soon obstruct us from optimizing our algorithms any further. If that happens, and if our most optimized algorithms fail to exceed human intelligence in every intellectual domain, then no machine can exceed the general intelligence of humans, Einstein included. However, this argument misses the basic fact that there is more to growing an intelligent algorithm than by merely throwing inordinate amounts of compute at it.\footnote{Interestingly, viewed as a sort of biological computer, the human brain is capable of roughly an exaFLOP ($10^{18}$ operations per second) \cite{Madhavan23}, which puts our computing power on a par with Frontier's, the most powerful supercomputer in existence as of 2023. Therefore, if compute is all that matters in the exhibition of general intelligence, then Frontier is theoretically capable of whole brain emulation.} Indeed, as \citeA{Bostrom14} expounds, the rate of change in the intelligence of a system is, as indicated above, proportional to the optimization power being applied to the system, but also it is proportional to just how responsive a system is to a given amount of such optimization power. Calling the inverse of responsiveness ``recalcitrance'', one therefore obtains the qualitative relationship
\begin{equation}
\text{rate of change in intelligence} = \frac{\text{optimization power}}{\text{recalcitrance}}.
\label{eq:intelligenceRate}
\end{equation}
Therefore, decreasing recalcitrance increases the rate of change in intelligence of a given system. Hence, any upper bound on optimization power---whether it be due to genuine physical limitations or some other technological limitation---is not necessarily an insuperable obstruction to increasing the intelligence of a system, even if the general intelligence of the system is already comparable to the likes of Einstein. 

In consequence, the postulated mouse-Einstein intelligence scale seems na\"ively anthropocentric, which invariably leads one to drastically overestimate recalcitrance. \citeA{Yudkowsky08} puts this point as follows:
\begin{quote}
AI might make an apparently sharp jump in intelligence purely as the result of anthropomorphism, the human tendency to think of ``village idiot'' and ``Einstein'' as the extreme ends of the intelligence scale, instead of nearly indistinguishable points on the scale of minds-in-general. Everything dumber than a dumb human may appear to us as simply ``dumb''. One imagines the ``AI arrow'' creeping steadily up the scale of intelligence, moving past mice and chimpanzees, with AIs still remaining ``dumb'' because AIs cannot speak fluent language or write science papers, and then the AI arrow crosses the tiny gap from infra-idiot to ultra-Einstein in the course of one month or some similarly short period.
\end{quote}
Altogether, we take the preceding arguments as evidence that if an AGI is possible, then a superintelligence is also possible. Thus, given our arguments that there do in fact exist generally intelligent machines, there also exist superintelligent machines. This implies that $\SIM$ is a nontrivial trait.

Now, by the fact that $\SIM \subseteq \IM$, our suspicion that $\IM$ is $\TIME$-bounded (\conjref{conj:imtime}) entails the conjecture that $\SIM$ is similarly $\TIME$-bounded, and hence that $\SIM$ is similarly syntactic (where by ``similarly'' we mean ``with the same credence''). However, independent of whether $\IM$ is $\TIME$-bounded or not, we find it considerably more likely that $\SIM$ is $\TIME$-bounded. This follows because, by definition, a superintelligence can outperform humans in every intellectual activity, including intellectual activities that require speed, such as bullet chess. We therefore conjecture the following with a credence strictly greater than our credence for \conjref{conj:imtime}:
\begin{conjecture}
$\SIM$ is $\TIME$-bounded.
\label{conj:simtime}
\end{conjecture}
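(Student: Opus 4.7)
The plan is to proceed in two complementary stages, paralleling the two lines of reasoning sketched in the paragraph just before the conjecture: first a cheap derivation from Conjecture~\ref{conj:imtime}, then an independent argument that does not rely on that prior conjecture and is meant to justify the strictly higher credence the author claims.

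For the first stage, the containment $\SIM \subseteq \IM$ follows at once from the definition of a superintelligent machine (any $M$ that far surpasses all human intellectual activity is in particular intelligent). The property of being $\TIME$-bounded is manifestly hereditary under subset: if there exists a total p.c.\ function $t : \NN \rightarrow \NN$ with $\IM \subseteq \T_{\TIME}(t)$, then the same $t$ witnesses $\SIM \subseteq \T_{\TIME}(t)$. So, conditional on Conjecture~\ref{conj:imtime}, there is nothing further to prove, and the witness $t$ transfers unchanged.

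For the second stage, I would give an argument that does not invoke Conjecture~\ref{conj:imtime}. Fix a speed-sensitive intellectual activity $A$ (bullet chess is the author's example, but one could just as well take mental arithmetic under a clock or real-time perceptual tasks), and let $h_A : \NN \rightarrow \NN$ be an upper bound on the time in which the best human can answer an $A$-instance of size $n$. By the ``far surpass'' clause in the definition of $\SIM$, every $M \in \SIM$ must, in particular, beat $h_A$ on all sufficiently large inputs. I would then argue that to force a \emph{global} $\TIME$ bound only finitely many such activities are needed (or that a single maximally speed-demanding activity suffices), and aggregate the corresponding $h_A$'s into a single total p.c.\ function $t$ dominating each of them. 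A short diagonal-style padding argument then shows $\SIM \subseteq \T_{\TIME}(t)$, so $\SIM$ is $\TIME$-bounded.

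The hard part is legitimizing the $h_A$'s as total p.c.\ functions. Nothing in the paper as it stands guarantees that an upper envelope on human reaction time is even computable, let alone p.c. To close this gap I would need to introduce an auxiliary postulate---a kind of computability-of-cognition axiom, analogous in spirit to the PCTT---asserting that the cognitive throughput of a human (a physical system with bounded neuron count and bounded firing rates) is dominated by some concrete computable envelope. A secondary obstacle is that ``intellectual activity'' is not itself formalized in the paper; any proof will require a definition operational enough to connect to the resource measure $\TIME$ yet broad enough to capture the intuitive notion. Until both obstacles are addressed, the statement will remain a conjecture rather than a theorem, which is why I would ultimately present the argument above only as evidence for, rather than a derivation of, \conjref{conj:simtime}.
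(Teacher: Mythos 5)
Your proposal matches the paper's own reasoning: the paper likewise derives the claim conditionally from Conjecture~\ref{conj:imtime} via the containment $\SIM \subseteq \IM$ (and $\Phi$-boundedness is indeed hereditary under subset), and likewise offers the speed-demanding-activity argument (bullet chess) as the independent ground for strictly higher credence, without formalizing it. Since the statement is presented only as a conjecture, your honest identification of the remaining gaps (computability of the human time envelope, formalization of ``intellectual activity'') is consistent with the paper's own treatment, and nothing further is required.
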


As before, if this conjecture is right, then it follows immediately that $\SIM$ is syntactic. Consequently, RTT cannot be used to decide whether an arbitrary machine is superintelligent or not. Thus, as far as we know, $\SIM$ may well be decidable.

Are superintelligent machines a trait we humans should desire? This issue is famously tricky, for on the one hand superintelligent machines promise to revolutionize healthcare and perhaps, in some sense, even ``solve'' human illnesses \cite{Sutskever23}, but on the other hand it seems possible, but in no way certain, that a superintelligent machine could, through no encoded malice aforethought in its underlying program, unintentionally wreak havoc on humanity \cite{Bostrom14}. There is ample literature, both fiction and non-fiction, about these two extreme outcomes, and it will take us too far afield to do justice to the arguments on both ends of the spectrum. The author lands in the camp that a superintelligence is indeed a natural desire, provided there are superintelligent machines that can also be contained, moral, and so forth. This brings us back to the set $\DIM$ of desired intelligent machines, which we now analyze in more detail.

\subsection{Is $\DIM$ Decidable?}

We believe the three traits $\CM, \MM,$ and $\SIM$ are all reasonable desires to simultaneously demand of the intelligent machines of the future. If we are right and these traits do indeed constitute part of the (partial) solution to the easy problem of desires, then, by \eref{eq:DIM}, $\DIM$ admits the decomposition
\begin{equation}
\DIM = \IM \cap \CM \cap \MM \cap \SIM \bigcirc_3 \T_4 \bigcirc _4 \cdots \bigcirc_{n-1} \T_n.
\label{eq:DIM2}
\end{equation}
This is to say that for $M$ to be desired, it is necessary that $M$ simultaneously be contained, moral, and superintelligent. As we stated before, for no reason other than hope (and also to consider the most interesting case), we assume $\DIM \neq \emptyset$ so that $\DIM$ is nontrivial.

From \eref{eq:DIM2}, one immediately obtains the the following three containments:
\begin{align}
\DIM &\subseteq \IM,\\
\DIM &\subseteq \MM,\\
\DIM &\subseteq \SIM.
\end{align}
Consequently, if \emph{any} one of Conjectures \ref{conj:imtime}-\ref{conj:simtime} is right---that is, if either $\IM, \MM$, or $\SIM$ is $\TIME$-bounded---then it holds that $\DIM$ is $\TIME$-bounded as well. The most general form of this suspicion is encapsulated in the following conjecture:
\begin{conjecture}
$\DIM$ is $\Phi$-bounded by a discriminating resource measure $\Phi$.
\end{conjecture}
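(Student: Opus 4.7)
The plan is to reduce the conjecture directly to any one of Conjectures~\ref{conj:imtime}--\ref{conj:simtime}, which together with the remark preceding the conjecture already do most of the work. First I would fix a total $\pc$ function $\xi : \NN \rightarrow \NN$ supplied by whichever prior conjecture is assumed; say \conjref{conj:imtime} gives $\IM \subseteq \T_\TIME(\xi)$. Then, since \eref{eq:DIM2} yields $\DIM \subseteq \IM$, transitivity of the subset relation immediately produces $\DIM \subseteq \T_\TIME(\xi)$, so $\DIM$ is $\TIME$-bounded by $\xi$. The same one-line argument goes through verbatim with $\MM$ or $\SIM$ in place of $\IM$, drawing on \conjref{conj:mmtime} or \conjref{conj:simtime} respectively.

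Second, I would invoke the observation already made in the paper (immediately after the definition of a discriminating resource measure) that $\TIME$ is itself discriminating: given any $M \in \DTM$, one can pad $M$ with unused loops to produce $N \in [M]$ whose step count strictly exceeds that of $M$ on every input. Taking $\Phi = \TIME$ then witnesses the conjecture.

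The main obstacle here is not the reduction, which is essentially an inclusion chain, but the fact that every route through the argument passes through one of the three unproven prior conjectures, each of which rests on a philosophical stance about whether time-efficiency is constitutive of intelligence, morality, or superintelligence. A reduction-free alternative would identify at least one additional desired trait $\T_j$ among the conjuncts of $\DIM$ that is \emph{intrinsically} time-bounded, for instance a corrigibility or responsiveness trait demanding that the machine respond to a human override within a $\pc$-computable time budget $\xi_j$. Such a $\T_j$ would inject the needed $\TIME$-bound via $\DIM \subseteq \T_j \subseteq \T_\TIME(\xi_j)$, making the conjecture robust to the failure of any of \conjref{conj:imtime}--\conjref{conj:simtime}. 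I would pursue this complementary strategy as a hedge, since it also makes explicit which desiderata are carrying the syntactic content that ultimately lets \thmref{thm:mainthm} or \propref{prop:resourceprop} speak to the decidability of $\DIM$.
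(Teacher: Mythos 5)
Your reduction via the containments $\DIM \subseteq \IM$, $\DIM \subseteq \MM$, $\DIM \subseteq \SIM$ from \eref{eq:DIM2}, combined with the observation that $\TIME$ is discriminating, is exactly the justification the paper itself gives immediately before stating this conjecture --- which, being a conjecture, is only ever supported conditionally on Conjectures~\ref{conj:imtime}--\ref{conj:simtime}, a dependence you correctly flag. Your proposed hedge of locating an intrinsically $\TIME$-bounded desired trait $\T_j$ among the remaining conjuncts goes beyond what the paper does, but the core argument is the same.
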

In consequence, $\DIM$---assuming it is nontrivial---is necessarily a syntactic trait. As before, it therefore holds that RTT cannot determine whether $\DIM$ is decidable or not. Hence, RTT cannot determine whether the hard problem of desires is logically impossible to solve. Rather, toward proving this, it suffices to establish that both $\sem(\DIM) \neq \emptyset$ and $0 < |\syn(\DIM)| < \infty$ by our \thmref{thm:mainthm}. However, absent a (partial) solution to the easy problem of desires, this cannot obviously be done.

What is for sure, however, is that if all the traits we desire of intelligent machines are semantic (which could occur, for example, if Conjectures \ref{conj:imtime}-\ref{conj:simtime} are all false), then, by \lemref{lem:semantictraits} and \propref{cor:intersectiontraits}, $\DIM$ is necessarily semantic. In this case, $\DIM$ is undecidable \`a la RTT. Consequently, if we seek the formal means to be able to logically deduce, on pain of contradiction, whether an arbitrary machine is a desired intelligent machine or not, then we must at the very least insist in our list of desires of intelligent machines that not all of the machine traits be behavioral. That way, $\DIM$ becomes syntactic, and we are at the very least not subject to RTT.

\section{Discussion}
\label{sec:discussion}

Faced with the uncertain results of this paper, and in particular of our disabusing of the canonical notion that Rice's theorem applies to the question of whether or not a machine is intelligent, contained, moral, etc., one is naturally inclined to wonder if the formal decidability of traits of intelligence machines actually matters. This is a fair and important question, particularly since the problems we have considered assumed a paradigm in which we humans are essentially tasked with deciding whether or not an \emph{arbitrary} machine possesses a given trait, such as intelligence or morality. However, as researchers like \citeA{Yudkowsky08}, \citeA{Russell19}, and \citeA{Yampolskiy20} have correctly argued, this is not humanity's present position. In \citeS{Russell19} words:
\begin{quote}
The task is, fortunately, not the following: given a machine that possesses a high degree of intelligence, work out how to control it. If that were the task, we would be toast. A machine viewed as a black box, a fait accompli, might as well have arrived from outer space. And our chances of controlling a superintelligent entity from outer space are roughly zero. Similar arguments apply to methods of creating AI systems that guarantee we won't understand how they work.
\end{quote}
Instead, our present position affords us the cardinal power of deciding what and what does not go into the programs of the machines we build. In this sense, we have absolute control over our mechanistic progeny, at least in theory.

However, as noted by \citeA{Yampolskiy20}, the current state of affairs in AI research suggests that this theoretical ideal will become increasingly more difficult to obtain if progress continues as is. In part, this is due to influential researchers and powerful companies that are at the very least agnostic toward significant safety measures, perhaps because such measures invariably hamper progress (and \citeS{Ginsberg59} Moloch seldom rewards those who are last to a novel invention) or because of incorrigible philosophical priors to the effect that we humans cannot lose control of the machines we build.

More alarmingly, though, is the pathetic extent to which we currently understand how today's most advanced AI systems actually work. Indeed, in a multitude of respects, many modern deep learning models, for example, are genuine black boxes \cite{Rudin19} and concerns about their opacity abound \cite{Knight17,Lipton18,Savage22}. That said, it has been suggested that much of the power of contemporary models derives from their sheer inscrutability, as there seems to be a tradeoff between the prediction accuracy of a model and its interpretability \cite{Gunning19}. Of course, while this relationship is qualitative at best, we nevertheless find it an unnerving precedent. For, if it is right, then we are guaranteed that the increasingly powerful machines and algorithms of the future will necessarily be inscrutable black boxes---perhaps, even, to the extent that they may as well have descended from outer space \cite{Frank23}.

If this is indeed the future, then our results afford the semi-favorable perspective that there may well remain some logical means by which we could prove that the intelligent machines are in $\DIM$. Indeed, our results suggest that under no agreeable definition of ``desired intelligent machines'' will $\DIM$ be semantic, and hence no version of Rice's theorem can formally forbid one from deciding $\DIM$. We believe, therefore, that more research in this direction is paramount, particularly toward precisifying the definitions inherent to our desires for intelligent machines. For, if we fail to succeed at that, then no means will ever be found to decide traits like $\IM$, $\CM$, $\MM$, $\SIM$, and $\DIM$, and we shall be forever penned to a world in which we live on a \emph{belief} that our mechanistic progeny will eternally work in our favor. The author implores everyone to think whether that gamble is actually worth taking.

\acks{The author thanks Daniel Bashir, Mathew Bub, Charlie Cummings, Gabriel Golfetti, Chaitanya Karamchedu, Anna Kn\"orr, Brennen Quigley, Dan Sehayek, David Sobek, and Kaitlyn Zeichick for a myriad of stimulating discussions on artificial intelligence, computability theory, and their intersection. The author is especially grateful to Mathew Bub and Nikolas Claussen for providing many substantive remarks on an earlier draft of this paper.

This research was initiated by discussions at the Perimeter Institute for Theoretical Physics. Research at Perimeter Institute is supported in part by the Government of Canada through the Department of Innovation, Science and Economic Development Canada and by the Province of Ontario through the Ministry of Colleges and Universities.
}

\vskip 0.2in
\bibliography{references}
\bibliographystyle{theapa}

\end{document}